\documentclass[11pt, a4paper]{chili}

\usepackage{parskip}
\usepackage{natbib}
\usepackage{enumitem}
\usepackage[table]{xcolor}
\usepackage{amsmath, amssymb}
\usepackage{booktabs}
\usepackage{multirow}
\usepackage{wrapfig}
\usepackage{lmodern}
\usepackage{geometry}
\usepackage{fontawesome5}  %
\usepackage{array}
\usepackage{makecell}

\usepackage{graphicx}
\usepackage{subcaption}
\usepackage{pifont}
\usepackage{float}
\usepackage{pgfplots}
\pgfplotsset{compat=1.18}
\usepackage{tikz}
\usetikzlibrary{patterns, positioning}
\usepackage{hyperref}
\usepackage[most]{tcolorbox}
\usepackage[nameinlink,capitalise]{cleveref}
\usepackage{amsthm}

\newtheorem{theorem}{Theorem}[section]
\newtheorem{proposition}[theorem]{Proposition}

\newtheorem{lemma}{Lemma}
\theoremstyle{remark}

\newlist{assetlist}{itemize}{1}
\setlist[assetlist]{
    leftmargin=1.8em,
    labelsep=0.6em,
    itemsep=0.3em,
    topsep=0.3em
}

\definecolor{lt2russet}{HTML}{8E3B30}
\definecolor{lt2grey}{RGB}{245,245,245}
\definecolor{MorandiRose}{RGB}{180,120,120}
\definecolor{lt2cream}{RGB}{251,235,211}     %
\definecolor{lt2tan}{RGB}{197,152,105}
\definecolor{lt2bandblue}{RGB}{232,240,250}  %
\definecolor{lt2good}{RGB}{34,120,60}
\definecolor{lt2bad}{RGB}{180,40,40}
\definecolor{lt2navy}{RGB}{27,58,111}        %
\definecolor{lt2blue}{RGB}{70,130,180}       %
\definecolor{lt2cream}{RGB}{251,235,211}     %
\definecolor{lt2tan}{RGB}{197,152,105}       %
\definecolor{lt2bandblue}{RGB}{232,240,250}  %
\definecolor{lt2bandred}{RGB}{252,238,232}   %
\definecolor{lt2bandgreen}{RGB}{231,243,235} %
\definecolor{lt2good}{RGB}{34,120,60}        %
\definecolor{lt2bad}{RGB}{180,40,40}         %
\definecolor{lt2cell}{RGB}{210,228,210}      %
\definecolor{lt2cellbad}{RGB}{248,220,220}   %
\definecolor{ourshl}{RGB}{235, 244, 255}    %
\definecolor{teacherhl}{RGB}{245, 245, 245} %

\newcommand{\hero}{\rowcolor{lt2cream}}

\newcommand{\bad}[1]{\textcolor{lt2bad}{#1}}
\newcommand{\yes}{\textcolor{lt2good}{\ding{51}}}
\newcommand{\no}{\textcolor{lt2bad}{\ding{55}}}

\newcommand{\gateop}[1]{\textcolor{lt2navy}{#1}}
\newcommand{\deltaruleop}[1]{\textcolor{MorandiRose}{#1}}

\newcommand{\hflogo}{
    \raisebox{-0.2em}{
        \includegraphics[height=1.2em]{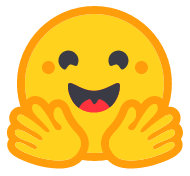}
    }
}
\newcommand{\ghlogo}{
    \raisebox{-0.2em}{
        \includegraphics[height=1.2em]{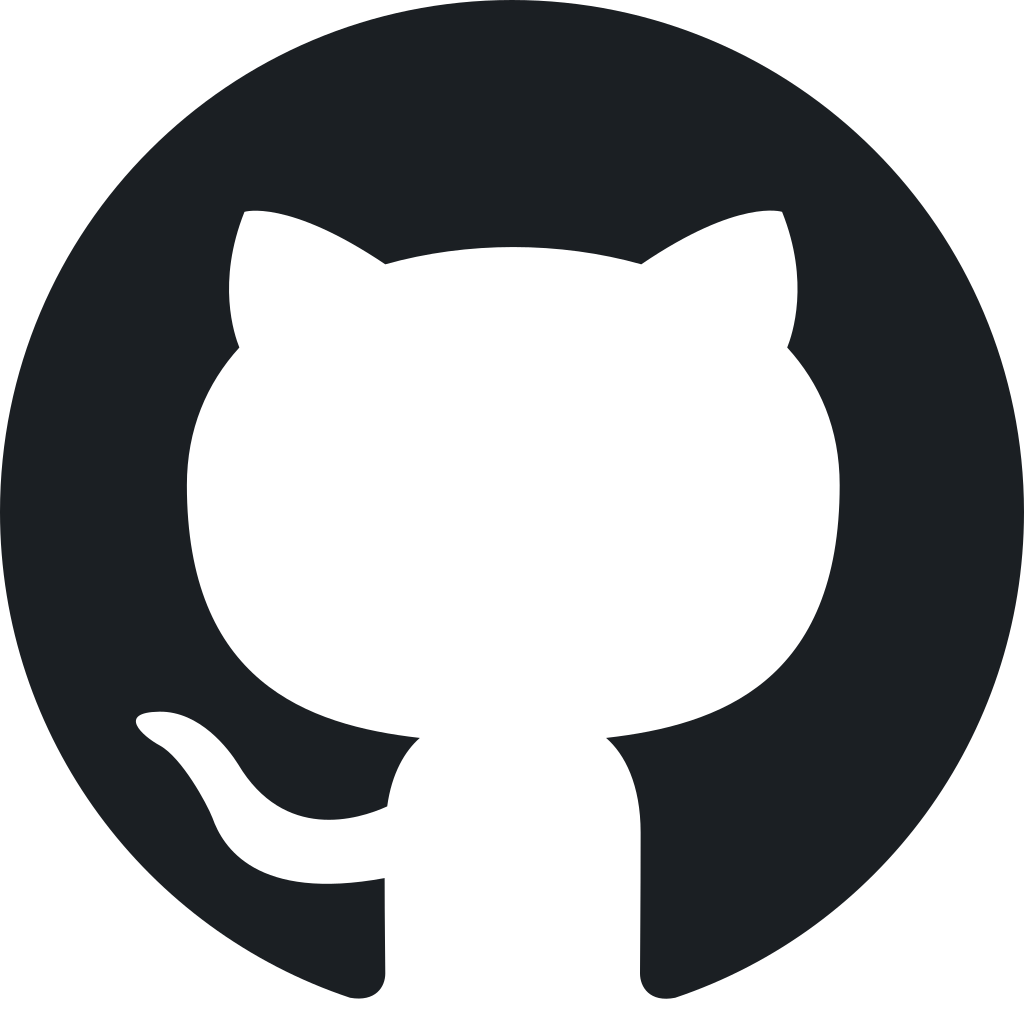}
    }
}
\begin{document}
\AtBeginDocument{
\hypersetup{
    urlcolor=MorandiRose
}
}
\makeatletter
\patchcmd{\archer@maketitle@onecolumn}
  {\vskip11pt}
  {\vskip3pt}
  {}{}
\makeatother

\title{LT2: Linear-Time Looped Transformers}

\author[1]{Chunyuan Deng}
\author[2]{Yizhe Zhang}
\author[3]{Rui-jie Zhu}
\author[1]{Yuanyuan Xu}
\author[4]{Jiarui Liu} 
\author[1]{T. S. Eugene Ng}
\author[1]{Hanjie Chen}

\affil[1]{Rice University}
\affil[2]{Apple}
\affil[3]{UC Santa Cruz}
\affil[4]{Carnegie Mellon University}
\correspondingauthor={chunyuan.deng@rice.edu, yizzhang@apple.com, 
ridger@live.cn, yx102@rice.edu, 
jiaruil5@andrew.cmu.edu, hanjie@rice.edu}

\maketitle

\vspace{-0.9em}
\begin{abstract}
Looped Transformers (LT) have emerged as a powerful architecture by iterating their layers multiple times before decoding the final token. However, their pairing with full attention retains quadratic complexity making it computationally expensive and slow. We introduce \textbf{LT2 (Linear-Time Looped Transformers)}, a family of looped architectures that replace quadratic softmax attention with subquadratic attention with linear-time complexity. We study two variants: \textit{LT2-linear} with linear attention and \textit{LT2-sparse} with sparse attention. We find looping uniquely synergizes with these variants: it enables iterative memory refinement in linear attention and progressively expands the effective receptive field in sparse attention. We formalize these benefits theoretically and demonstrate consistent empirical gains across controlled recall, state-tracking, and language modeling tasks. We then explore \textbf{LT2-hybrid}, a hybrid architecture that combines different attention variants in a looped setting. We find two architectural variants promising: (1) LT2-hybrid (GDN+DSA), which interleaves linear and sparse attention to \textit{maximize efficiency}, matching the standard looped transformer's quality at fully linear-time cost. and (2) LT2-hybrid (Full+GDN), which interleaves GDN with a small fraction of full attention layers to \textit{maximize quality}, surpassing the standard looped transformer in both performance and efficiency. Furthermore, we also show how to turn a pre-trained LT into an LT2-hybrid model. With only about 1B tokens of training, our converted model (\textit{Ouro-hybrid-1.4B}) outperforms industry-level 1B models and is competitive with industry-level 4B models while keeping the speed benefits of linear-time attention. Together, these two directions show a clear path to making looped transformers a more scalable architecture for language modeling and advancing the development of efficient, capable small language models.

\begin{itemize}[
    leftmargin=-0.4em,
    labelsep=-0.2em,
    itemsep=0.6em,
    align=left
]
    \item[\ghlogo] \textbf{Codebase}:
    \href{https://github.com/chili-lab/LT2}{\textbf{https://github.com/chili-lab/LT2}}

    \item[\hflogo] \textbf{\textcolor{lt2tan!90!black}{Huggingface Checkpoints}}: \href{https://huggingface.co/chili-lab/Ouro-hybrid-1.4B}{\textbf{https://huggingface.co/chili-lab/Ouro-hybrid-1.4B}}
    
\end{itemize}
\end{abstract}

\begin{figure}[H]
\vspace{-0.4cm}
    \centering
    \includegraphics[width=\linewidth]{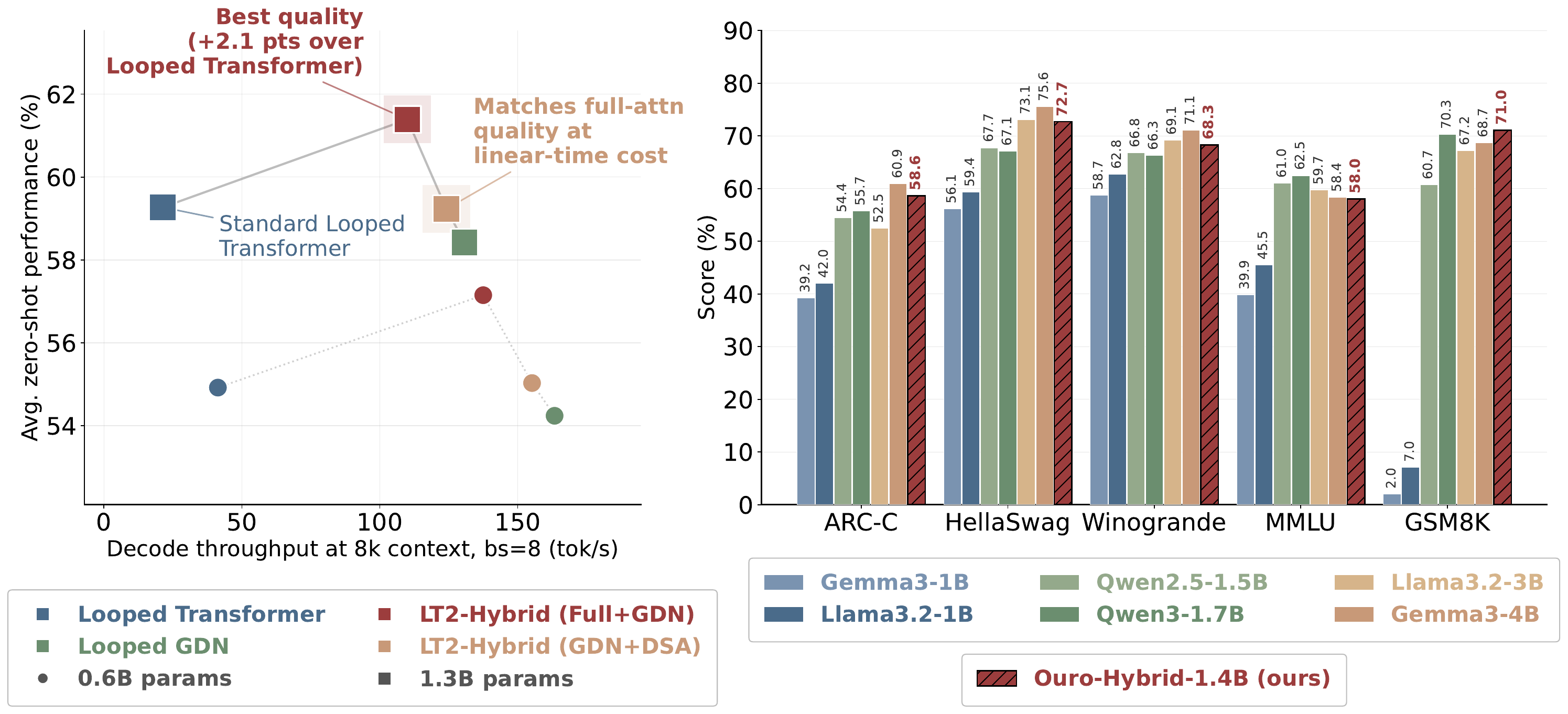}
    \caption{(\textbf{Left}) New parameter-efficiency frontier introduced by LT2. (\textbf{Right}) Converted LT2-Hybrid outperforms similarly sized industry-level 1B while matching 4B ones.}
\label{fig:main}
    \vspace{-0.8cm}
\end{figure}
\section{Introduction}

Scaling neural language models along the parameter axis has driven much of modern NLP's progress~\citep{brown2020languagemodelsfewshotlearners,kaplan2020scalinglawsneurallanguage,hoffmann2022trainingcomputeoptimallargelanguage}. A complementary axis---scaling depth via weight-shared recurrence---has recently emerged as a promising alternative. These architectures, often called looped transformers (LT, originally \textit{Universal Transformers}~\cite{dehghani2019universaltransformers}), reuse the same weights across multiple steps before decoding the final prediction token~\cite{giannou2023looped,yanglooped,zhu2025scalinglatentreasoninglooped}. In effect, repeated computation becomes effective depth: the model performs several rounds of latent computation while keeping the unique parameter count fixed, making looped transformers an appealing approach to parameter-efficient reasoning.

\begin{wrapfigure}{r}{0.5\textwidth}
  \vspace{-1.0em}
  \centering
  \includegraphics[width=0.5\textwidth]{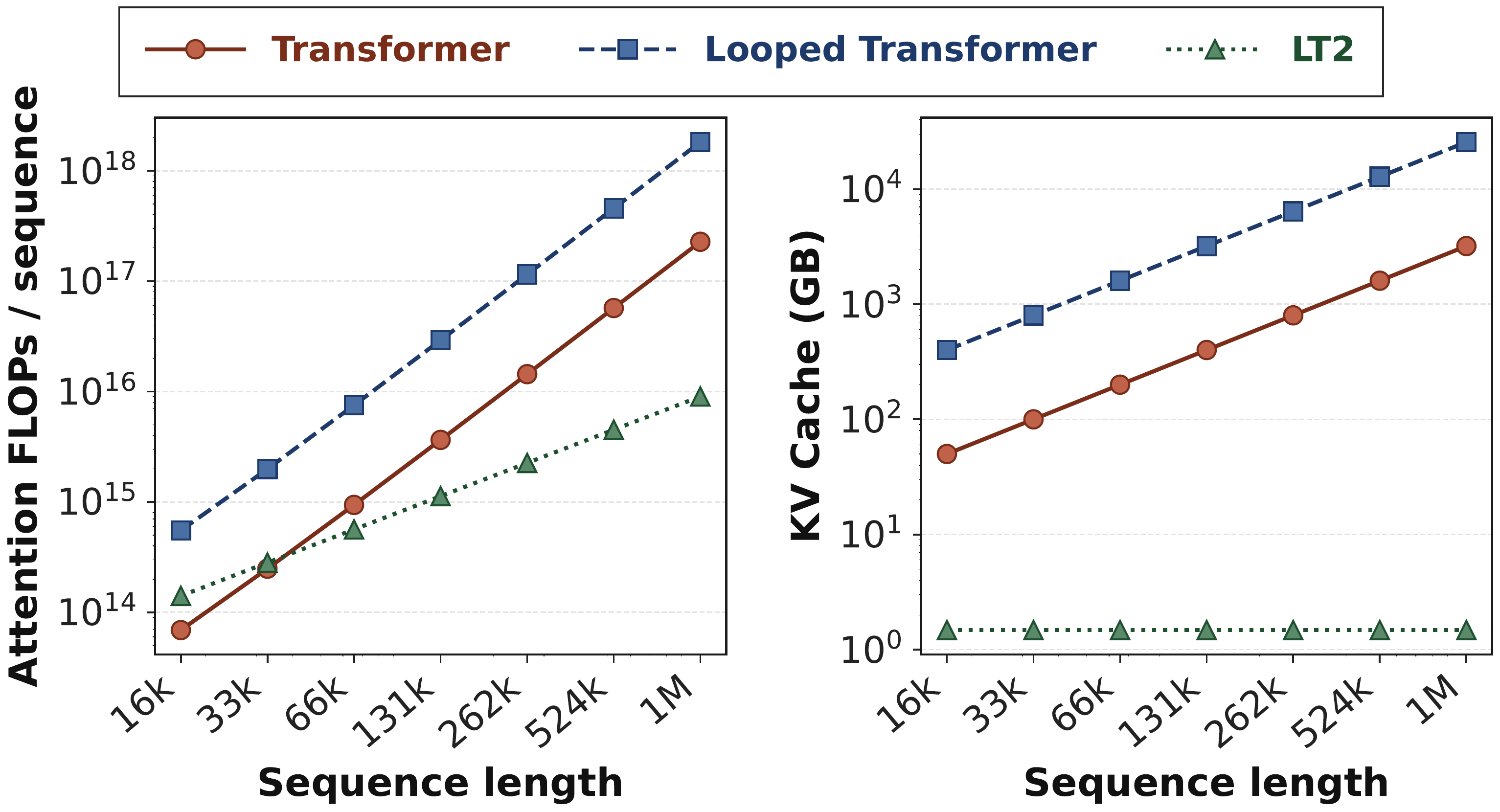}
  \vspace{-1.6em}
  \caption{Attention FLOPs and inference cache memory vs.\ sequence length for a
  $1.3\text{B}$ model.}
  \label{fig:flops_comparison}
  \vspace{-1.0em}
\end{wrapfigure}
However, current looped transformers scale poorly because each loop has to re-apply quadratic full attention over the entire sequence repeatedly. Its cost and inference-time storage therefore grow with sequence length, and compound with each loop iteration. As a result, even though parameters are reused, training-time attention FLOPs and inference-time KV-cache usage scale poorly with the number of loops, making attention the dominant bottleneck in scaling looped transformers~\cite{tay-etal-2023-scaling,zhu2025scalinglatentreasoninglooped}. As Figure~\ref{fig:flops_comparison} shows, processing every token through attention for $T$ iterations causes both training-time attention FLOPs and inference-time KV-cache memory to grow substantially. At long contexts the quadratic attention term dominates, and adding loop steps quickly becomes impractical~\cite{zhu2025scalinglatentreasoninglooped}.

We introduce \textbf{LT2 (Linear-Time Looped Transformers)}, a family of looped architectures that replace quadratic softmax attention with subquadratic token-mixing primitives. We primarily study two distinct variants, LT2-linear and LT2-sparse, which replace the quadratic attention with linear attention~\cite{katharopoulos2020transformers,yang2024parallelizing,kimiteam2025kimilinearexpressiveefficient} and sparse attention~\cite{xiao2024efficientstreaminglanguagemodels,deepseekai2025deepseekv32pushingfrontieropen}, respectively. We show that looped operation can \textit{turn compute into context}: it enables finer-grained control over recurrent memory in linear attention and enlarges the receptive field in sparse attention; we provide intuition in \S~\ref{sec:loop-gain} and a detailed theoretical analysis in Appendix~\ref{app:loop-dplr}. Furthermore, we explore \textbf{LT2-hybrid}, a hybrid architecture that pushes the performance--efficiency frontier to a new level by mixing different attention variants in the looped setting. We demonstrate that LT2-hybrid (GDN~\citep{yang2024gated} + DSA~\citep{deepseekai2025deepseekv32pushingfrontieropen})---which combines linear and sparse attention within a looped setting---matches the standard looped transformer's quality (59.3\% avg.\ zero-shot) while delivering $\sim$5.7$\times$ higher decode throughput at 8k context (125 vs.\ 22 tokens/s, batch size 8), entirely without quadratic attention. LT2-hybrid (Full + GDN), which interleaves GDN with a small fraction of full-attention layers, goes further: it improves average zero-shot performance by \textit{+2.1 points} over the standard looped transformer (61.4\% vs.\ 59.3\%) while still achieving $\sim$5$\times$ higher decode throughput at the same setting, and consistently outperforms the standard looped transformer across language modeling, recall, state-tracking, and efficiency benchmarks (\S~\ref{sec:experiments}).

Finally, we explore distilling a pretrained looped transformer (specifically, Ouro~\citep{zhu2025scalinglatentreasoninglooped}) into an LT2 model. As shown in Figure~\ref{fig:main} (right), with only $\sim$1B tokens of continued training, our converted \textit{Ouro-Hybrid-1.4B} retains the quality of its full-attention teacher while inheriting LT2's linear-time efficiency. The resulting model is competitive with industry-level open-source models in the 1B--4B parameter range across standard zero-shot benchmarks, matching or exceeding 1B-class baselines and approaching 3B--4B models on several tasks. This demonstrates that practitioners need not retrain from scratch: existing looped transformers can be efficiently converted into linear-time variants, lowering the cost barrier to adopting the LT2 family models.
\section{LT2: Linear-Time Looped Transformer}
\label{sec:method}
\subsection{Architecture}
\label{sec:method-lt2}
\textbf{Looped Transformer (LT).}
Let $L$ denote sequence length and $d$ the hidden dimension; we write the
hidden-state sequence as $\mathbf{h}\in\mathbb{R}^{L\times d}$ and the state at
position $t$ as $\mathbf{h}_t\in\mathbb{R}^{d}$. A standard Transformer of depth
$N$ stacks $N$ independently-parameterized blocks
$\{\mathcal{F}_\ell\}_{\ell=1}^{N}$, each consisting of a token mixer and a
position-wise FFN with residual connections:
\begin{equation}
  \mathcal{F}_\ell(\mathbf{h}) = \mathbf{h}' + \mathrm{FFN}_\ell(\mathbf{h}'),
  \qquad
  \mathbf{h}' = \mathbf{h} + \mathrm{MHA}_\ell(\mathbf{h}),
  \label{eq:transformer}
\end{equation}
where $\mathrm{MHA}_\ell$ is multi-head self-attention (we omit pre-norm for
brevity). A \emph{Looped Transformer} (LT) reuses these $N$ shared blocks for
$T$ iterations:
\begin{equation}
  \mathbf{h}^{(0)} = \mathrm{Emb}(\mathbf{x}),
  \quad
  \mathbf{h}^{(\tau)}
  =
  \bigl(\mathcal{F}_N \circ \cdots \circ \mathcal{F}_1\bigr)\!\bigl(\mathbf{h}^{(\tau-1)}\bigr),
  \quad \tau=1,\dots,T,
  \quad
  \hat{\mathbf{y}} = \mathrm{Dec}\!\bigl(\mathbf{h}^{(T)}\bigr),
  \label{eq:lt}
\end{equation}
yielding effective depth $T\cdot N$ with only $N$ unique parameter sets---a $T\times$
parameter reduction over a Transformer of equivalent depth. Following
Ouro~\citep{zhu2025scalinglatentreasoninglooped}, we use a fixed $T$ throughout
pre-training and we discuss adaptive computation time in the Appendix~\ref{app:act}.
 
\textbf{LT2.}
LT2 simply replaces the MHA sub-layer in Eq.~\eqref{eq:transformer} with a
subquadratic token mixer, so each shared block becomes
\begin{equation}
  \mathcal{F}_\ell(\mathbf{h}) = \mathbf{h}' + \mathrm{FFN}_\ell(\mathbf{h}'),
  \qquad
  \mathbf{h}' = \mathbf{h} + \mathrm{LinearMixer}_\ell(\mathbf{h}),
\end{equation}
where $\mathrm{LinearMixer}_\ell$ is any linear- or sparse-attention primitive
in Table~\ref{tab:mixers}. Throughout,
$\mathbf{q}_t,\mathbf{k}_t\!\in\!\mathbb{R}^{d_k}$ and
$\mathbf{v}_t\!\in\!\mathbb{R}^{d_v}$ denote the query/key/value projections of
$\mathbf{h}_t$; $\mathbf{S}_t\!\in\!\mathbb{R}^{d_k\times d_v}$ is the recurrent
state of a linear-attention mixer. We additionally insert a zero-initialized, per-channel learned gate
$\boldsymbol{\rho}_{\tau}\!\in\!\mathbb{R}^{d}$ as a residual across
loop iterations,
$\mathbf{h}^{(\tau)} = \widetilde{\mathbf{h}}^{(\tau)} + \boldsymbol{\rho}_{\tau}\odot\mathbf{h}^{(\tau-1)}$,
where $\widetilde{\mathbf{h}}^{(\tau)}$ is the output of the looped block stack at iteration $\tau$ (i.e., $\widetilde{\mathbf{h}}^{(\tau)} = (\mathcal{F}_N \circ \cdots \circ \mathcal{F}_1)(\mathbf{h}^{(\tau-1)})$). Thus our setup includes two levels of residual connections: a traditional per-block identity residual connection and a learned per-loop residual.

\begin{table*}[t]
\centering\small
\caption{Token mixers supported by LT2. Blue highlights \gateop{gating/retention} and burnt sienna highlights
\deltaruleop{DPLR-style} operations. Train FLOPs are reported per layer for a sequence of length $L$; cache/state memory is per layer at inference. $w$ denotes the sparse-attention window/budget size with $w \ll L$.}
\label{tab:mixers}
\resizebox{\textwidth}{!}{%
\begin{tabular}{@{}l l l c c@{}}
\toprule
\textbf{Family} & \textbf{Mixer} & \textbf{State update rule} & \textbf{Train FLOPs} & \textbf{Cache / State mem.} \\
\midrule
Full attn.
& Softmax MHA
& $(\mathbf{K}_{t},\mathbf{V}_{t})=\bigl([\mathbf{K}_{t-1};\mathbf{k}_{t}],\,[\mathbf{V}_{t-1};\mathbf{v}_{t}]\bigr)$
& $\mathcal{O}(L^{2} d)$
& $\mathcal{O}(L d)$ \\
\midrule
\multirow{8}{*}{Linear attn.\ (\textbf{LT2-LA})}
& LA~\citep{katharopoulos2020transformers}
& $\mathbf{S}_{t}=\mathbf{S}_{t-1}+\mathbf{k}_{t}\mathbf{v}_{t}^{\!\top}$
& $\mathcal{O}(L\,d_k d_v)$
& $\mathcal{O}(d_k d_v)$ \\
& RetNet~\citep{sun2023retentive}
& $\mathbf{S}_{t}=\gateop{\gamma}\,\mathbf{S}_{t-1}+\mathbf{k}_{t}\mathbf{v}_{t}^{\!\top}$
& $\mathcal{O}(L\,d_k d_v)$
& $\mathcal{O}(d_k d_v)$ \\
& Mamba2~\citep{dao2024transformersssmsgeneralizedmodels}
& $\mathbf{S}_{t}=\gateop{\alpha_{t}}\,\mathbf{S}_{t-1}+\mathbf{k}_{t}\mathbf{v}_{t}^{\!\top}$
& $\mathcal{O}(L\,d_k d_v)$
& $\mathcal{O}(d_k d_v)$ \\
& GLA~\citep{yanggated}
& $\mathbf{S}_{t}=\gateop{\mathrm{Diag}(\boldsymbol{\alpha}_{t})}\mathbf{S}_{t-1}
+\mathbf{k}_{t}\mathbf{v}_{t}^{\!\top}$
& $\mathcal{O}(L\,d_k d_v)$
& $\mathcal{O}(d_k d_v)$ \\
& HGRN2~\citep{qin2024hgrn2}
& $\mathbf{S}_{t}=\gateop{\mathrm{Diag}(\boldsymbol{\alpha}_{t})}\mathbf{S}_{t-1}
+\bigl(\mathbf{1}-\gateop{\boldsymbol{\alpha}_{t}}\bigr)\mathbf{v}_{t}^{\!\top}$
& $\mathcal{O}(L\,d_k d_v)$
& $\mathcal{O}(d_k d_v)$ \\
& DeltaNet~\citep{schlag2021linear,yang2024parallelizing}
& $\mathbf{S}_{t}=\deltaruleop{\bigl(\mathbf{I}-\beta_{t}\mathbf{k}_{t}\mathbf{k}_{t}^{\!\top}\bigr)}\mathbf{S}_{t-1}
+\deltaruleop{\beta_{t}\mathbf{k}_{t}\mathbf{v}_{t}^{\!\top}}$
& $\mathcal{O}(L\,d_k d_v)$
& $\mathcal{O}(d_k d_v)$ \\
& GDN~\citep{yang2024gated}
& $\mathbf{S}_{t}=\gateop{\alpha_{t}}\,
\deltaruleop{\bigl(\mathbf{I}-\beta_{t}\mathbf{k}_{t}\mathbf{k}_{t}^{\!\top}\bigr)}\mathbf{S}_{t-1}
+\deltaruleop{\beta_{t}\mathbf{k}_{t}\mathbf{v}_{t}^{\!\top}}$
& $\mathcal{O}(L\,d_k d_v)$
& $\mathcal{O}(d_k d_v)$ \\
& KDA~\citep{kimiteam2025kimilinearexpressiveefficient}
& $\mathbf{S}_{t}=\deltaruleop{\bigl(\mathbf{I}-\beta_{t}\mathbf{k}_{t}\mathbf{k}_{t}^{\!\top}\bigr)}
\gateop{\mathrm{Diag}(\boldsymbol{\alpha}_{t})}\mathbf{S}_{t-1}
+\deltaruleop{\beta_{t}\mathbf{k}_{t}\mathbf{v}_{t}^{\!\top}}$
& $\mathcal{O}(L\,d_k d_v)$
& $\mathcal{O}(d_k d_v)$ \\
\midrule
\multirow{3}{*}{Sparse attn.\ (\textbf{LT2-SA})}
& Window
& $(\mathbf{K}_{t},\mathbf{V}_{t})=(\mathbf{K}_{[t-w:t]},\mathbf{V}_{[t-w:t]})$ \ \ (sliding cache)
& $\mathcal{O}(L\,w\,d)$
& $\mathcal{O}(w\,d)$ \\
& NSA~\citep{yuan2025nativesparseattentionhardwarealigned}
& KV cache + compressed blocks; $\mathcal{I}_{t}$: top-$w$ selected indices
& $\mathcal{O}(L\,w\,d)$
& $\mathcal{O}(L\,d)$ \\
& DSA~\citep{deepseekai2025deepseekv32pushingfrontieropen}
& KV cache; $\mathcal{I}_{t}$: top-$w$ via lightning indexer
& $\mathcal{O}(L\,w\,d)$
& $\mathcal{O}(L\,d)$ \\
\bottomrule
\end{tabular}%
}
\vspace{-0.4cm}
\end{table*}

\subsection{Beyond Efficiency: Benefits of Looping}
\label{sec:loop-gain}

Subquadratic attention provides clear efficiency gains. A more interesting question is what looping adds to these attention variants. We make two claims: with $T$ loop iterations, a diagonal-plus-low-rank (DPLR) linear-attention block turns its rank-$1$ state update into a rank-$T$ update, and a sliding-window block turns its window of size $w$ into an effective receptive field of size $Tw$.

\paragraph{Loop $\times$ DPLR linear attention: rank-$T$ update on recurrent memory.}
Frontier linear-attention architectures now use DPLR
mixers, e.g.\ GDN~\citep{yang2024gated},
KDA~\citep{kimiteam2025kimilinearexpressiveefficient},
and RWKV7~\citep{peng2025rwkv7gooseexpressivedynamic}. We take KDA as our
running example, which maintains a recurrent state
$\mathbf{S}_t\!\in\!\mathbb{R}^{d_k\times d_v}$ at sequence position $t$ via
\begin{equation}
  \mathbf{S}_t
  =
  \mathbf{A}_t\,\mathbf{S}_{t-1}
  +
  \beta_t\,\mathbf{k}_t\mathbf{v}_t^{\top},
  \qquad
  \mathbf{A}_t
  =
  \mathrm{Diag}(\boldsymbol{\alpha}_t)
  \bigl(\mathbf{I}-\beta_t\,\mathbf{k}_t\mathbf{k}_t^{\top}\bigr),
  \label{eq:dplr}
\end{equation}
where $\boldsymbol{\alpha}_t\!\in\![0,1]^{d_k}$ is a diagonal gate, so
$\mathbf{A}_t$ is identity ($\mathbf{I}$) plus a rank-$1$ ($\mathbf{k}_t\mathbf{k}_t^{\top}$) perturbation. Prior work shows that a
single such block can only model permutations of two elements per token, and
cannot solve the $S_n$ word problem ($n\!\geq\!3$) in finite
precision~\citep{grazzi2025unlockingstatetrackinglinearrnns}. When the
\emph{same} shared block is looped $T$ times, each loop iteration
$\tau\!\in\!\{1,\dots,T\}$ contributes a fresh DPLR factor
$\mathbf{A}^{(\tau)}_{t}$ acting on the recurrent state at position $t$, so the
cumulative per-token state-transition operator becomes
\begin{equation}
  \mathbf{A}^{\mathrm{eff}}_{t}
  =
  \prod_{\tau=1}^{T}\mathbf{A}^{(\tau)}_{t}
  =
  \prod_{\tau=1}^{T}
  \mathrm{Diag}\!\bigl(\boldsymbol{\alpha}^{(\tau)}_{t}\bigr)
  \!\left(
    \mathbf{I}
    -
    \beta^{(\tau)}_{t}\,
    \mathbf{k}^{(\tau)}_{t}\mathbf{k}^{(\tau)\top}_{t}
  \right).
  \label{eq:loop-product}
\end{equation}
\emph{DeltaProduct}~\citep{siems2025deltaproductimprovingstatetrackinglinear}
shows that the expressivity gains of looped DPLR depend on the relationships
among the loop-specific keys $\{\mathbf{k}^{(\tau)}_{t}\}_{\tau=1}^{T}$. In one
extreme, if all keys are identical, then each loop iteration erases the same
direction in recurrent memory, yielding no expressivity gain over the
non-looped case. In the other extreme, if the keys from different loop
iterations are orthogonal, then the loop erases historical information along
$T$ distinct directions. In this case, the original transition, which contains a
single rank-$1$ perturbation, is replaced by an effective transition with a
rank-$T$ memory-erasure subspace. We detailed discuss the proof in
Appendix~\ref{app:loop-dplr}.

\paragraph{Loop $\times$ sparse attention: receptive-field expansion.}
A single sliding-window block with window $w$ lets each query at position $t$
attend only to the last $w$ tokens,
\[
  \mathcal{I}_{t}^{(1)} = \{t-w+1,\dots,t\},
\]
so the per-loop receptive field is $\mathcal{O}(w)$ and anything beyond the $w$
tokens is invisible. Looping the block re-runs the same window over the sequence,
and information moves further with every loop iteration: at loop
iteration $\tau$, position $t$ attends to a window of loop-$(\tau{-}1)$ states,
and those states have already absorbed information from their own windows at
loop iteration $\tau{-}2$, and so on. Chaining this argument inductively
(Appendix~\ref{app:loop-swa}) gives the receptive field after $T$ loop
iterations:
\begin{equation}
  \mathcal{I}_{t}^{(T)}
  \supseteq
  \bigl\{\max(1,\,t-Tw+1),\,\dots,\,t\bigr\},
  \qquad
  \bigl|\mathcal{I}_{t}^{(T)}\bigr|
  =
  \mathcal{O}(Tw).
  \label{eq:loop-swa}
\end{equation}
In other words, $T$ loops of a window-$w$ block reach as far back as $T$ stacked
layers of window-$w$ attention~\citep{chen2025powerattentionexponentiallyscalingreceptive},
but with $T\times$ fewer parameters. Looping therefore \emph{turns compute into
context}: once $T$ is moderately large, a small fixed window already covers long
sequences, which makes sparse mixers a natural partner for looping in
long-context settings.

\subsection{Hybrid LT2: Mixing Mixers Across Depth and Loops}
\label{sec:hybrid}

\begin{figure}[t]
  \vspace{-1.0em}
  \centering
  \includegraphics[width=0.9\textwidth]{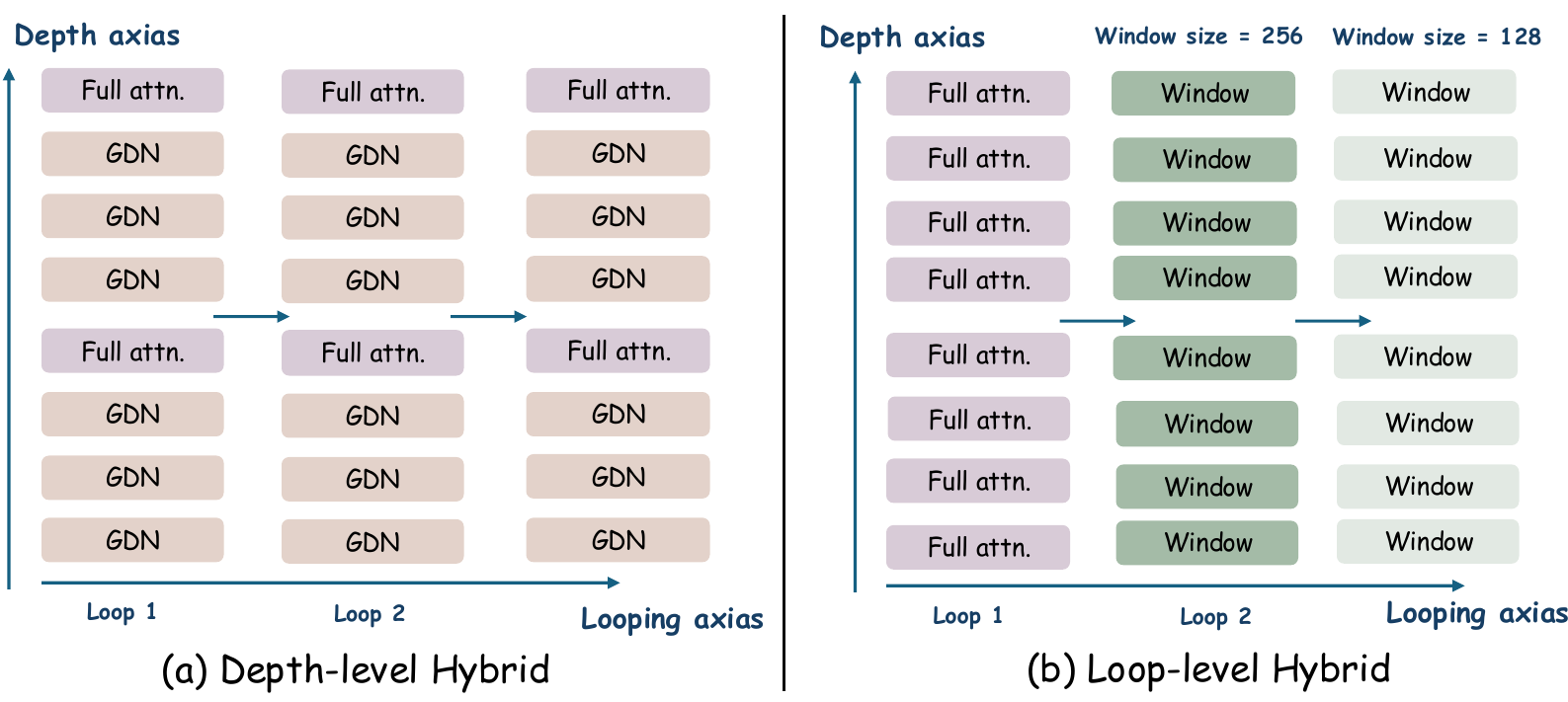}
  \vspace{-0.3em}
  \caption{Two ways to hybridize LT2. \textbf{(a) Depth-level} interleaves
  full-attention layers among linear layers inside the shared
  block. \textbf{(b) Loop-level}
  varies the mixer across loop
  iterations, e.g.\ a full-attention loop first, then sliding-window loops
  with shrinking windows ($256\!\to\!128$).}
  \label{fig:hybrid}
  \vspace{-0.4cm}
\end{figure}

We further explore hybrid architectures in the loop setting. A common practice
in hybrid models is to interleave linear blocks with full-attention blocks to
achieve strong language modeling performance while restoring recall
capability~\citep{lieber2024jambahybridtransformermambalanguage,
merrill2026olmohybridtheorypractice,qwenteam2026qwen35omnitechnicalreport,
nvidia2025nvidianemotron3efficient}. We show that looped transformers open a
\emph{second} axis for such mixing: beyond varying the mixer along \emph{depth},
we can also vary it across \emph{loop iterations}. We explore both options
(Figure~\ref{fig:hybrid}) in following section~\ref{sec:abl-hybrid-design}.

\section{Experiments}
\label{sec:experiments}

We organize the main experiments around four questions. First, we test whether LT2 is competitive at standard language-modeling scale (\S~\ref{sec:lm}) and under realistic long-context retrieval (\S~\ref{sec:long-context}). We then ablate the hybrid-design choices: where hybridization is applied, how mixers are arranged along depth, and what mixer ratio is used (\S~\ref{sec:abl-hybrid-design}). We then study the SDPA output gate, which mitigates attention-sink accumulation under looped processing (\S~\ref{sec:abl-output-gate}). Additional, we also did experiments cover synthetic recall/state tracking, long-context efficiency, and training stability (\S~\ref{sec:stability}, \ref{sec:efficiency}, and \ref{sec:recall-state}).

\subsection{Language modeling}
\label{sec:lm}

We pre-train all models on FineWeb-Edu~\citep{penedo2024finewebdatasetsdecantingweb} at 0.6B and 1.3B parameters with a 100B-token budget, using T=4 loops for every looped variant. The hybrid ratio is 1:4 for both (Full:Linear) and (GDN:DSA) variants. The full setup is in Appendix~\ref{app:lm-setup}.
Table~\ref{tab:downstream} summarizes the results. We provide a detailed efficiency comparison in Section~\ref{sec:efficiency}.

\paragraph{Subquadratic Mixers Nearly Match the Full-Attention Loop.}
Looped GDN, KDA, and DSA all perform within roughly one average point of the Looped Transformer reference at both scales, while avoiding quadratic complexity. At the smaller 0.6B scale, looped GDN still slightly trails the Looped Transformer; however, at the larger 1.3B scale, it surpasses the Looped Transformer while preserving linear-time complexity. In the looped setting, we find that both gating and DPLR linear attention are important, with gating appearing to play an even larger role than the DPLR-style update formulation. In contrast, the looped pure DeltaNet variant is less stable in our study, which ultimately limits its performance.

\paragraph{A Linear--Sparse Hybrid Loop Matches the Full-Attention Loop at a Fraction of the Cost.} Looped Hybrid (GDN+DSA), which contains no full attention, matches the full-attention reference at both scales ($9.72$ vs.\ $9.87$ PPL at $1.3$B). It also delivers the greatest efficiency speedup: a $2.9\times$ decode-throughput speedup at 32k context. We think this is an interesting hybrid setting where linear attention helps with global compression while sparse attention helps with exact KV position selection.

\begin{table}[H]
\vspace{-0.2cm}
\centering
\footnotesize
\caption{\textbf{Zero-shot downstream performance across two scales on FineWeb-Edu, $K{=}4$ loops.}
D-Gate = data-dependent gating; $\Delta$ = DPLR linear variants.
\colorbox{lt2cream}{Cream} marks the best LT2 model without full attention. Best per column \emph{within each scale} in \textbf{bold}, second-best \underline{underlined}.
Token budgets are relative to Chinchilla compute-optimal scaling~\cite{hoffmann2022trainingcomputeoptimallargelanguage}.}
\vspace{-0.2cm}
\label{tab:downstream}
\setlength{\tabcolsep}{3.2pt}
\renewcommand{\arraystretch}{0.95}
\begin{tabular}{l cc *{10}{r}}
\toprule
\textbf{Model} & D-Gate & $\Delta$ & PPL$(\downarrow)$ & ARC-E & ARC-C & HellaS. & PIQA & WG & OBQA & SciQ & BoolQ & Avg.$(\uparrow)$ \\
\midrule
\multicolumn{13}{l}{\textbf{0.6B parameters / 100B tokens (8$\times$ Chinchilla Ratio~\cite{hoffmann2022trainingcomputeoptimallargelanguage})}} \\
\midrule
Transformer                      & ---  & ---  & 13.14 & 63.09 & 30.72 & 47.43 & 69.53 & 56.24 & 35.6 & 68.2 & 50.07 & 51.34 \\
Looped Transformer (ref)         & ---  & ---  & 11.92 & 67.13 & 34.67 & 53.29 & 70.58 & 62.83 & \underline{38.2} & \textbf{73.6} & 54.87 & 56.42 \\
\addlinespace[2pt]
\rowcolor{lt2navy!8}\multicolumn{13}{l}{\textit{\textbf{\textcolor{lt2navy}{$\blacktriangleright$ LT2-linear attention}}}} \\
Looped RetNet~\citep{sun2023retentive}                    & \no  & \no  & ---   & \multicolumn{9}{c}{\bad{\itshape training diverged}} \\
Looped HGRN2~\citep{qin2024hgrn2}                         & \yes & \no  & 14.59 & 59.82 & 27.93 & 43.17 & 67.34 & 52.13 & 33.4 & 65.2 & 48.53 & 49.69 \\
Looped Mamba2~\citep{dao2024transformersssmsgeneralizedmodels} & \yes & \no  & 12.78 & 64.53 & 31.82 & 49.87 & 69.74 & 58.63 & 35.6 & 68.8 & 51.83 & 53.86 \\
Looped DeltaNet~\citep{schlag2021linear,yang2024parallelizing} & \no  & \yes & 14.16 & 60.47 & 28.53 & 44.22 & 67.87 & 53.24 & 33.8 & 65.5 & 49.13 & 50.12 \\
Looped GDN~\citep{yang2024gated}                 & \yes & \yes & 12.06 & 66.43 & 33.89 & 52.62 & 70.27 & 61.48 & 36.4 & 70.5 & 54.13 & 55.74 \\
Looped KDA~\citep{kimiteam2025kimilinearexpressiveefficient} & \yes & \yes & 12.13 & 66.12 & 33.63 & 52.37 & 70.13 & 61.22 & 36.2 & 70.2 & 53.92 & 55.49 \\
\addlinespace[2pt]
\rowcolor{lt2navy!8}\multicolumn{13}{l}{\textit{\textbf{\textcolor{lt2navy}{$\blacktriangleright$ LT2-sparse attention}}}} \\
Looped Window               & ---  & ---  & 12.87 & 64.23 & 31.53 & 48.83 & 69.82 & 57.34 & 35.8 & 68.5 & 51.23 & 52.17 \\
Looped NSA~\citep{yuan2025nativesparseattentionhardwarealigned} & ---  & ---  & 12.30 & 65.57 & 32.74 & 51.43 & 70.04 & 60.32 & 36.0 & 69.5 & 53.13 & 54.84 \\
Looped DSA~\citep{deepseekai2025deepseekv32pushingfrontieropen} & ---  & ---  & 12.08 & 66.37 & 33.82 & 52.53 & 70.23 & 61.42 & 36.4 & 70.4 & 54.07 & 55.67 \\
\addlinespace[2pt]
\rowcolor{lt2navy!8}\multicolumn{13}{l}{\textit{\textbf{\textcolor{lt2navy}{$\blacktriangleright$ Hybrid LT2 (linear/sparse/full permutation)}}}} \\
Looped Hybrid (Full+Window)           & ---  & ---  & 12.24 & 65.32 & 32.13 & 51.23 & 69.86 & 58.42 & 36.0 & 69.2 & 53.13 & 54.43 \\
Looped Hybrid (Full+DSA)              & ---  & ---  & 12.20 & 65.53 & 32.34 & 51.42 & 70.04 & 58.63 & 36.2 & 69.4 & 53.32 & 54.62 \\
Looped Hybrid (Full+GDN)
                                 & \yes & \yes & \textbf{11.43}
        & \textbf{69.82} & \textbf{37.34} & \textbf{55.83} & \textbf{72.62} & \textbf{64.61} & \textbf{38.9} & \underline{73.3} & \textbf{57.74} & \textbf{58.65} \\
\hero
Looped Hybrid (GDN+DSA)          & \yes & \yes & \underline{11.85} & \underline{67.43} & \underline{34.53} & \underline{53.42} & \underline{70.63} & \underline{62.92} & 37.0 & 71.2 & \underline{55.13} & \underline{56.53} \\

\midrule
\multicolumn{13}{l}{\textbf{1.3B parameters / 100B tokens (4$\times$ Chinchilla Ratio~\cite{hoffmann2022trainingcomputeoptimallargelanguage})}} \\
\midrule
Transformer                      & ---  & ---  & 10.65 & 67.52 & 33.84 & 52.47 & 71.03 & 61.48 & 36.6 & 71.3 & 54.02 & 56.04 \\
Looped Transformer (ref)         & ---  & ---  &  9.87 & 70.83 & 37.54 & 57.06 & 72.43 & 65.83 & 38.6 & 74.1 & 57.83 & 59.27 \\
\addlinespace[2pt]
\rowcolor{lt2navy!8}\multicolumn{13}{l}{\textit{\textbf{\textcolor{lt2navy}{$\blacktriangleright$ LT2-linear attention}}}} \\
Looped Mamba2~\citep{dao2024transformersssmsgeneralizedmodels} & \yes & \no  & 10.30 & 69.47 & 36.63 & 55.94 & 72.68 & 64.37 & 38.2 & 73.0 & 57.03 & 58.43 \\
Looped GDN~\citep{yang2024gated}                 & \yes & \yes &  9.75 & 71.28 & 38.33 & 57.73 & 73.37 & 66.26 & 39.1 & 74.3 & 58.78 & 59.92 \\
Looped KDA~\citep{kimiteam2025kimilinearexpressiveefficient} & \yes & \yes &  9.68 & 71.57 & 38.62 & 57.99 & 73.53 & 66.42 & 39.3 & 74.6 & 58.98 & 60.14 \\
\addlinespace[2pt]
\rowcolor{lt2navy!8}\multicolumn{13}{l}{\textit{\textbf{\textcolor{lt2navy}{$\blacktriangleright$ LT2-sparse attention}}}} \\
Looped Window              & ---  & ---  & 10.42 & 68.43 & 35.47 & 54.87 & 71.32 & 63.23 & 36.9 & 71.7 & 55.87 & 57.23 \\
Looped NSA~\citep{yuan2025nativesparseattentionhardwarealigned} & ---  & ---  & 10.17 & 69.02 & 35.97 & 55.08 & 71.52 & 64.03 & 37.2 & 72.2 & 56.53 & 57.72 \\
Looped DSA~\citep{deepseekai2025deepseekv32pushingfrontieropen} & ---  & ---  &  9.97 & 69.93 & 36.93 & 56.38 & 71.94 & 64.87 & 37.7 & 72.9 & 57.42 & 58.54 \\
\addlinespace[2pt]
\rowcolor{lt2navy!8}\multicolumn{13}{l}{\textit{\textbf{\textcolor{lt2navy}{$\blacktriangleright$ Hybrid LT2 (linear/sparse/full permutation)}}}} \\
Looped Hybrid (Full+Window)           & ---  & ---  &  9.84 & 70.93 & 37.12 & 56.68 & 73.12 & 64.34 & 38.8 & 73.3 & 58.56 & 59.13 \\
Looped Hybrid (Full+DSA)              & ---  & ---  &  9.80 & 71.13 & 37.28 & 56.84 & 73.24 & 64.52 & 38.9 & 73.4 & 58.73 & 59.28 \\
Looped Hybrid (Full+GDN)
                                 & \yes & \yes & \textbf{9.12}
        & \textbf{74.82} & \textbf{41.63} & \textbf{61.04} & \textbf{75.93} & \textbf{69.52} & \textbf{41.3} & \textbf{75.4} & \textbf{62.04} & \textbf{62.89} \\
\hero
Looped Hybrid (GDN+DSA)          & \yes & \yes &  \underline{9.50} & \underline{72.44} & \underline{39.33} & \underline{58.84} & \underline{73.98} & \underline{67.13} & \underline{39.7} & \underline{74.9} & \underline{59.77} & \underline{60.73} \\
\bottomrule
\end{tabular}
\vspace{-0.4cm}
\end{table}
\paragraph{Looped Hybrid (Full+GDN) Pushes to a New Pareto Frontier.} This is the strongest configuration overall, improving the general language modeling performance at both scales ($61.39$ vs.\ $59.27$ at $1.3$B), with the largest gains on harder reasoning task. Since only a small fraction of layers are quadratic, it still yields a $\times2.7$ decode speedup. Together, the two hybrids bracket the new Pareto frontier: one matches full attention at near-linear cost, the other exceeds it while staying markedly faster than the all-full-attention loop.

\subsection{Efficiency at long context}
\label{sec:efficiency}
We measure prefill and decode throughput for the four looped LT2
candidates from $1$k to $32$k tokens, at batch sizes $\{1,2,4,8\}$, on
a single H100 (80\,GB) with FlashAttention-2~\citep{dao2023flashattention2fasterattentionbetter} for softmax attention and
a fused chunkwise kernel for GDN. All variants use $T{=}4$ at matched
parameter count. Open squares mark the last length each configuration
fit in memory before going out-of-memory.

Two trends are visible across all four batch rows.
\begin{figure}[H]
    \centering
    \includegraphics[width=\textwidth]{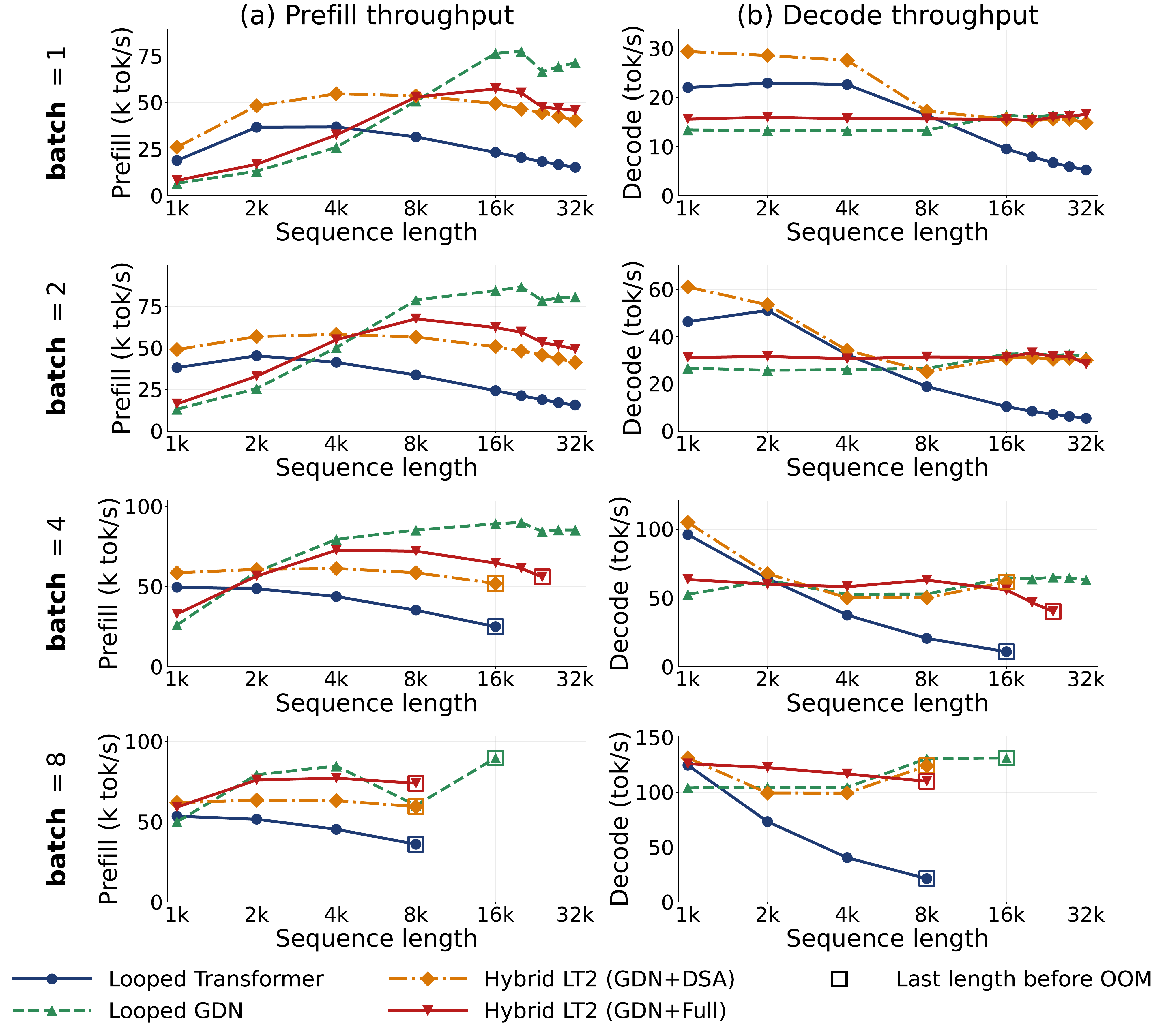}
    \vspace{-1.8em}
    \caption{\textbf{Efficiency at long context across batch sizes.}
    Rows are batch size ($1{,}2{,}4{,}8$); columns are
    \textbf{(a)} prefill and \textbf{(b)} decode throughput vs.\
    sequence length. Open squares mark the last sequence length each
    configuration reached before exhausting $80$\,GB of HBM. Looped
    GDN and Hybrid LT2 (GDN+Full) are the only variants that reach
    $32$k at every batch size and that hold their decode throughput
    flat across the full range; Hybrid LT2 (GDN+DSA) tracks them
    closely thanks to its top-$k$ KV reads.}
    \label{fig:efficiency}
    \vspace{-0.8em}
\end{figure}
\paragraph{Linear-time mixers eliminate the long-context decode cliff.}
Looped Transformer loses more than half of its decode throughput
between $4$k and $32$k because its KV cache keeps growing per loop
iteration. Looped GDN, Hybrid LT2 (GDN+Full), and Hybrid LT2 (GDN+DSA)
all hold a flat decode rate across the entire range and across all
batch sizes: at $\text{bs}{=}1$, $32$k they decode roughly $3\times$
faster than the LT, and at $\text{bs}{=}8$ they reach
$32$k while the LT OOMs by $8$k. The advantage
compounds with batch size, since each extra batch element adds a
fixed-size GDN state but a length-proportional KV cache.

\paragraph{Linear-time mixers also extend the OOM frontier.} As batch
grows, the Looped Transformer OOMs progressively earlier
($\text{bs}{=}4$: cap near $16$k, $\text{bs}{=}8$: cap near $8$k),
while Looped GDN reaches $32$k at every batch size and Hybrid LT2
(GDN+Full) reaches $32$k up to $\text{bs}{=}8$. Hybrid LT2 (GDN+DSA)
sits between the two, since DSA still maintains a KV cache for top-$k$
selection but only reads a small slice of it per query. In practice
this is the difference between serving long context at a useful batch
size and not.

\subsection{Ablation: hybrid ratio, pattern, and hybridization level}
\label{sec:abl-hybrid-design}

The hybrid LT2 in Section~\ref{sec:lm} fixes three design choices at once:
\emph{how much} attention sits in the loop, \emph{where} it sits along depth,
and \emph{at what level} mixers are mixed. In this section we did a careful ablation study over this (Table~\ref{tab:hybrid-ablation}).

\begin{table}[t]
\centering
\footnotesize
\caption{\textbf{Hybrid LT2 ablations.} $1.3$B / $T{=}4$ / $100$B FineWeb-Edu
tokens. \colorbox{lt2cream}{Cream} marks the best row in each group. Avg.\ is
the eight-task mean from Table~\ref{tab:downstream}.}
\label{tab:hybrid-ablation}
\setlength{\tabcolsep}{4pt}
\renewcommand{\arraystretch}{1.0}
\begin{tabular}{l l l c c}
\toprule
\textbf{Configuration} & \textbf{Full$:$GDN} & \textbf{Pattern / Schedule} & \textbf{PPL}~$(\downarrow)$ & \textbf{Avg.}~$(\uparrow)$ \\
\midrule
\rowcolor{lt2navy!8}\multicolumn{5}{l}{\textit{\textbf{\textcolor{lt2navy}{$\blacktriangleright$ (1) Hybrid ratio} (depth-interleaved)}}} \\
Looped Transformer (ref)
                  & $1{:}0$  & ---                  & 9.87  & 59.27 \\
Hybrid $1{:}1$    & $1{:}1$  & interleave           & 9.41  & 60.92 \\
\hero
Hybrid $1{:}4$ (\textbf{default})
                  & $1{:}4$  & interleave           & 9.31  & 61.39 \\
Hybrid $1{:}6$    & $1{:}6$  & interleave           & 9.36  & 61.07 \\
Hybrid $1{:}12$   & $1{:}12$ & interleave           & 9.74  & 59.51 \\
Looped GDN        & $0{:}1$  & ---                  & 10.02 & 58.42 \\
\addlinespace[2pt]
\rowcolor{lt2navy!8}\multicolumn{5}{l}{\textit{\textbf{\textcolor{lt2navy}{$\blacktriangleright$ (2) Hybrid pattern} (ratio fixed at $1{:}4$, depth-level)}}} \\
\hero
Bookend           & $1{:}4$  & Full at top \& bottom, GDN in middle
                                                    & 9.27  & 61.52 \\
Interleave (\textbf{default})
                  & $1{:}4$  & every 5th layer is Full
                                                    & 9.31  & 61.39 \\
Front-loaded      & $1{:}4$  & all Full layers at the bottom of the stack
                                                    & 9.45  & 60.61 \\
Back-loaded       & $1{:}4$  & all Full layers at the top of the stack
                                                    & 9.53  & 60.43 \\
\addlinespace[2pt]
\rowcolor{lt2navy!8}\multicolumn{5}{l}{\textit{\textbf{\textcolor{lt2navy}{$\blacktriangleright$ (3) Hybridization level} (matched parameters)}}} \\
\hero
Random sample $+$ majority vote ($K{=}5$)
                  & $1{:}4$  & resample $1/5$ Full per step; vote at eval
                                                    & 9.26  & 61.55 \\
Depth-level (\textbf{default})
                  & $1{:}4$  & per-layer Full / GDN interleave
                                                    & 9.31  & 61.39 \\
Loop-level coarse$\to$fine
                  & ---      & Full~$\to$~SWA-512~$\to$~SWA-256~$\to$~SWA-128
                                                    & 9.36  & 60.71 \\
Loop-level fine$\to$coarse
                  & ---      & SWA-128~$\to$~SWA-256~$\to$~SWA-512~$\to$~Full
                                                    & 9.42  & 61.10 \\
\bottomrule
\end{tabular}
\vspace{-0.3em}
\end{table}

\paragraph{Ratio: clean inverse-U with the optimum at $1{:}4$.}
Sweeping the Full$:$GDN ratio between Looped Transformer ($1{:}0$) and
Looped GDN ($0{:}1$), the interior traces a clean inverse-U with $1{:}4$
on top. Too much attention crowds out the recurrent regularization
documented in Section~\ref{sec:stability}; too little starves the loop of
precise retrieval. $1{:}4$ is the smallest amount of attention that still
recovers full retrieval quality, matching standard hybrid Transformer
baselines~\citep{lahoti2026mamba3improvedsequencemodeling}.

\paragraph{Pattern: spreading beats concentrating.}
At fixed $1{:}4$, bookend (Full at top and bottom, GDN in the middle)
slightly edges out the uniform interleave, hinting at a small benefit
from attention at both the input encoding and the final read-out.
Concentrating the attention layers at one end --- front-loaded or
back-loaded --- loses more than $0.7$ points of average accuracy. Maybe the
takeaway is that any reasonable spread along depth is much better than
any concentration. 

\paragraph{Level: across-iteration heterogeneity does not help.}
We try three loop-level schedules in place of depth-level mixing:
\emph{coarse-to-fine} (Full$\to$SWA-512$\to$SWA-256$\to$SWA-128),
\emph{fine-to-coarse} (the reverse), and a stochastic baseline that
resamples a $1{:}4$ depth-level hybrid every step and majority-votes
$K{=}5$ samples at eval. Coarse-to-fine wins on PPL but loses on
downstream, it narrows the receptive field on the final iteration
over-fits local statistics. Fine-to-coarse inverts the trade. Random-vote
is the best overall but at $5{\times}$ inference compute,
which is hard to justify against the simple fixed interleave.

\subsection{Ablation: attention sinks and the SDPA output gate}
\label{sec:abl-output-gate}

A natural concern with weight-shared loops is that pathologies of the
underlying attention block (in particular, the \emph{attention sink}~\citep{xiao2024efficientstreaminglanguagemodels}) where a
small set of tokens absorb a disproportionate share of softmax
mass~\citep{sun2024massiveactivationslargelanguage} may compound
across loop iterations: the same softmax block is re-applied to a residual
stream that already carries the sink from the previous pass.
Gated Attention~\citep{qiu2025gatedattentionlargelanguage} show that a head-specific sigmoid gate after Scaled
Dot-Product Attention (SDPA) eliminates the sink in standard transformers. We
ask the same question for our looped models and adopt the same fix, applied inside the looped block so $W_\theta$
is reused on every iteration. We add this gate to the three
LT2 variants, keep the FFN width matched in parameter count, and re-train at
$1.3$B/$T{=}4$ on $100$B FineWeb-Edu tokens.
\begin{figure}[t]
    \centering
    \includegraphics[width=0.98\textwidth]{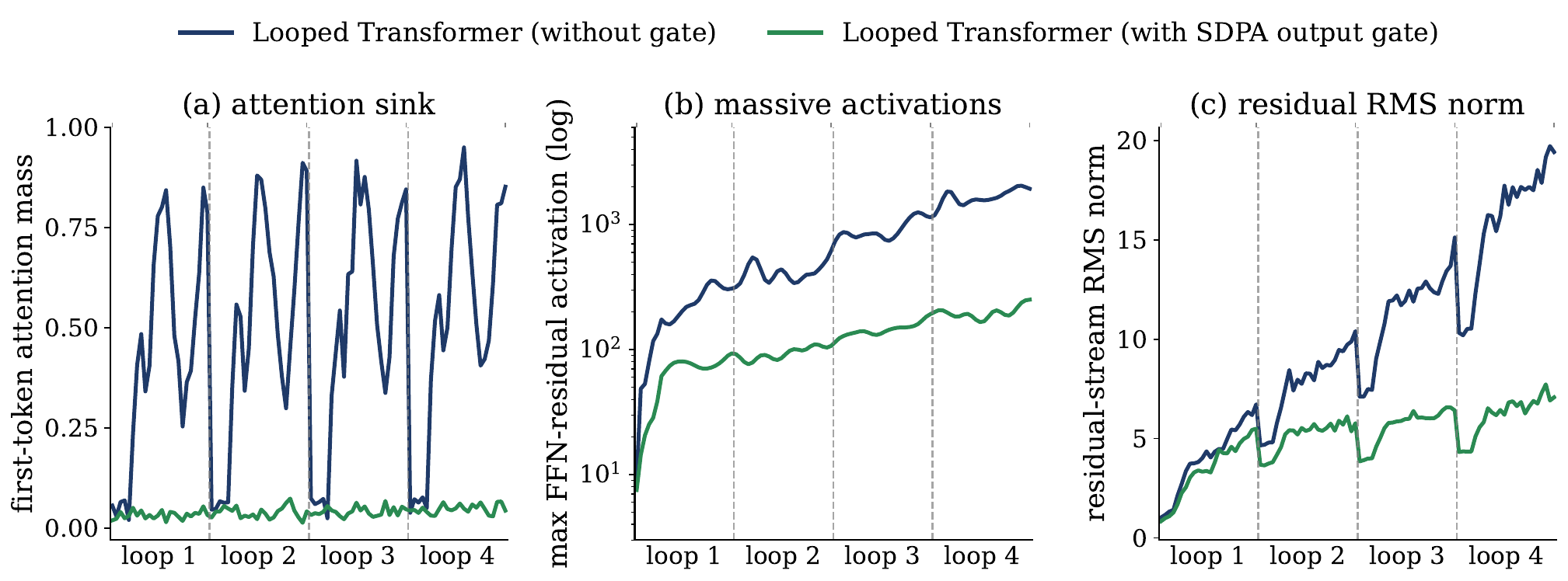}
    \vspace{-0.6em}
    \caption{\textbf{Unrolled diagnostics for the Looped Transformer
    ($T{=}4$, 24 layers).} The x-axis runs through the unrolled computation,
    with dashed lines marking loop boundaries.
    \textbf{(a)} First-token attention mass forms a sawtooth that intensifies
    each loop --- the sink is re-injected rather than reset.
    \textbf{(b)} Max FFN-residual activation follows the same compounding
    pattern (log scale).
    \textbf{(c)} Residual-stream RMS norm grows with both within-loop depth
    and across loop iterations.
    The SDPA output gate flattens (a)/(b) and substantially mitigates ---
    though does not eliminate --- the across-loop growth in (c).}
    \label{fig:loop-sink-massact}
    \vspace{-1.3em}
\end{figure}

\begin{wraptable}{r}{0.50\textwidth}
\vspace{-1.3em}
\centering
\footnotesize
\setlength{\tabcolsep}{4pt}
\renewcommand{\arraystretch}{1.0}
\caption{\textbf{Effect of the SDPA output gate on the three softmax-containing
LT2 variants.} 1.3B / $T{=}4$ / 100B tokens. We report the mean over the
eight zero-shot benchmarks of Table~\ref{tab:downstream}.}
\label{tab:gate-ablation}
\begin{tabular}{l c c c}
\toprule
\textbf{Model} & \textbf{Gate} & \textbf{PPL}~$(\downarrow)$ & \textbf{Avg.}~$(\uparrow)$ \\
\midrule
\multirow{3}{*}{Looped Transformer}
    & ---       & 9.87  & 59.27 \\
    & \yes      & \textbf{9.39}  & \textbf{60.70} \\
    & $\Delta$  & $-0.48$ & $+1.43$ \\
\midrule
\multirow{3}{*}{LT2-Hybrid (Full+GDN)}
    & ---       & 9.31  & 61.39 \\
    & \yes      & \textbf{9.03}  & \textbf{62.33} \\
    & $\Delta$  & $-0.28$ & $+0.94$ \\
\midrule
\multirow{3}{*}{LT2-Hybrid (GDN+DSA)}
    & ---       & 9.72  & 59.23 \\
    & \yes      & \textbf{9.53}  & \textbf{59.96} \\
    & $\Delta$  & $-0.19$ & $+0.73$ \\
\bottomrule
\end{tabular}
\vspace{-1.3em}
\end{wraptable}
\paragraph{The sink is real and compounds across loops.} 
Figure~\ref{fig:loop-sink-massact}
unrolls the Looped Transformer along the trajectory
$(\text{loop }1,\text{layer }1)\!\to\!(\text{loop }1,\text{layer }24)\!\to\!(\text{loop }2,\text{layer }1)\!\to\!\cdots$. The first-token attention mass traces a sawtooth that
\emph{intensifies across loops}: the sink learned in loop~$t$ is re-injected
into loop~$t{+}1$ rather than reset, so each successive iteration starts
already biased toward the sink. The maximum residual activation
follows the same compounding pattern, consistent with the
established literature~\citep{sun2024massiveactivationslargelanguage}. And the residual-stream RMS norm grows along both within-loop depth and across loop iterations, reaching
${\sim}20$ times by the end of loop~4.

Overall, attention sinks and massive activations are not
artifacts of standard transformers alone. weight-shared loops mildly amplify
them by re-applying the same softmax to a residual that already carries the
sink. A single head-specific sigmoid gate inside the loop suppresses the
compounding and yields a small but consistent improvement on every
softmax-containing LT2 variant. We recommend adopting this gate in all such variants in looped setting.

\subsection{Training stability}
\label{sec:stability}

A practical concern with looped models is that repeatedly applying the same block can amplify activations and destabilize optimization. We track the language-modeling loss and global gradient norm throughout pre-training and find that the choice of mixer inside the loop has a pronounced effect on stability.

\paragraph{Gating and the delta rule keep the linear loop bounded.}
\begin{wrapfigure}{r}{0.6\textwidth}
    \vspace{-1.0em}
    \centering
    \includegraphics[width=0.6\textwidth]{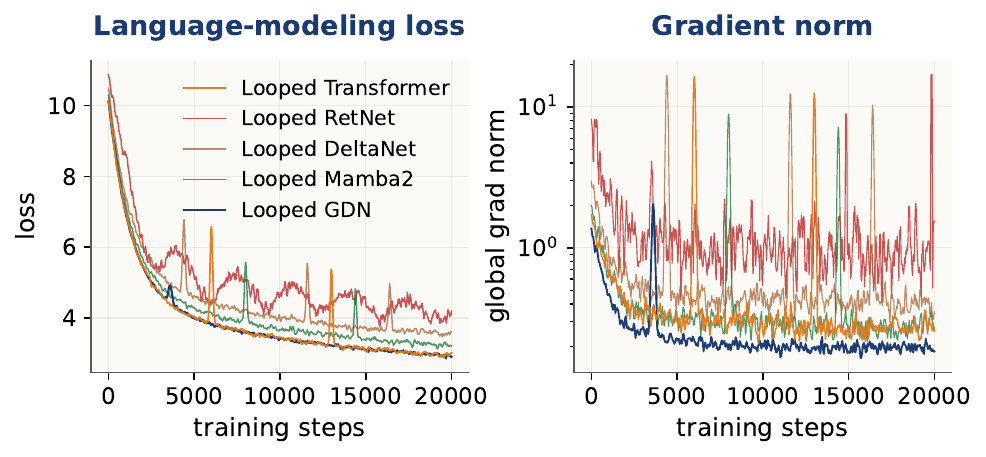}
    \caption{Looped GDN trains with the smoothest loss and the smallest gradient norms across all linear and full-attention variants; Looped RetNet, which lacks both data-dependent gating and a delta rule, diverges.}
    \label{fig:linear-stability}
    \vspace{-0.5em}
\end{wrapfigure}
Figure~\ref{fig:linear-stability} compares the looped Transformer against four subquadratic mixers. Looped RetNet exhibits persistently large gradient norms and frequent spikes throughout training, consistent with its divergence in Table~\ref{tab:downstream}. Looped DeltaNet and Looped Mamba2 are noticeably better but still display occasional spikes that propagate into the loss. In contrast, Looped GDN tracks below the Looped Transformer in gradient norm for essentially the entire run and produces the smoothest loss curve of any variant. Two ingredients appear to matter: a data-dependent gate, which lets the recurrence forget stale state instead of letting it accumulate across iterations, and the delta rule, which bounds updates to the recurrent memory. Mixers that have only one of the two (Mamba2 has gating without the delta rule; DeltaNet has the delta rule with weaker gating) are stable but visibly noisier than GDN, while RetNet, which has neither, is unstable.

\paragraph{Sparse attention is stable but slightly less capable.}
\begin{wrapfigure}{r}{0.6\textwidth}
    \vspace{-1.0em}
    \centering
    \includegraphics[width=0.6\textwidth]{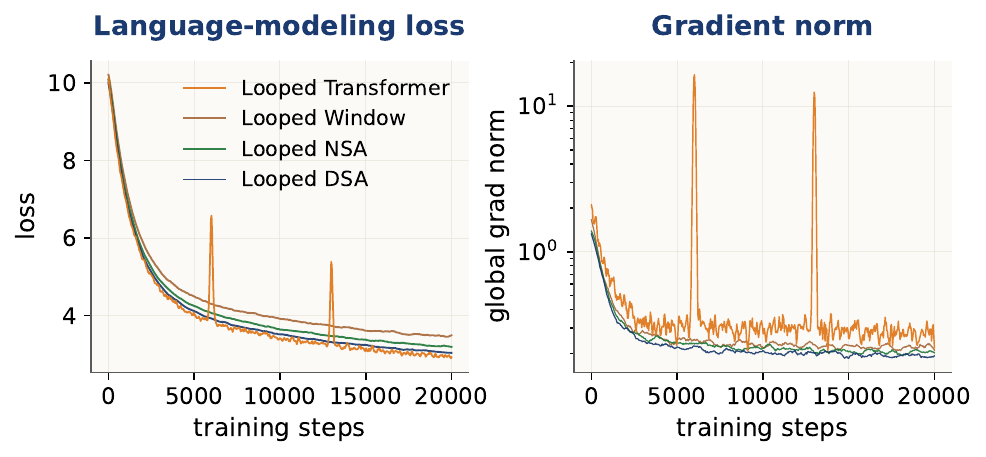}
    \caption{Sparse looped variants train without the spikes seen in the full-attention loop, but reach a slightly higher final loss than the Looped Transformer.}
    \label{fig:sparse-stability}
    \vspace{-0.5em}
\end{wrapfigure}
Figure~\ref{fig:sparse-stability} reports the same diagnostics for sparse-attention loops. All three sparse variants (Window, NSA, DSA) train smoothly: their gradient norms sit at or below the Looped Transformer for the entire run, and none of them shows the sharp spikes that occasionally appear in the full-attention loop around the middle of training. The price is a small but consistent gap in language-modeling loss --- restricting each iteration to a sparse receptive field caps the per-loop computation and slows convergence relative to dense attention. Among the sparse choices, Looped DSA is the strongest, which is why we adopt it as the sparse component of LT2 in the remainder of the paper.

\paragraph{Hybrid mixers combine stability and capability.}
\begin{wrapfigure}{r}{0.6\textwidth}
    \vspace{-1.0em}
    \centering
    \includegraphics[width=0.6\textwidth]{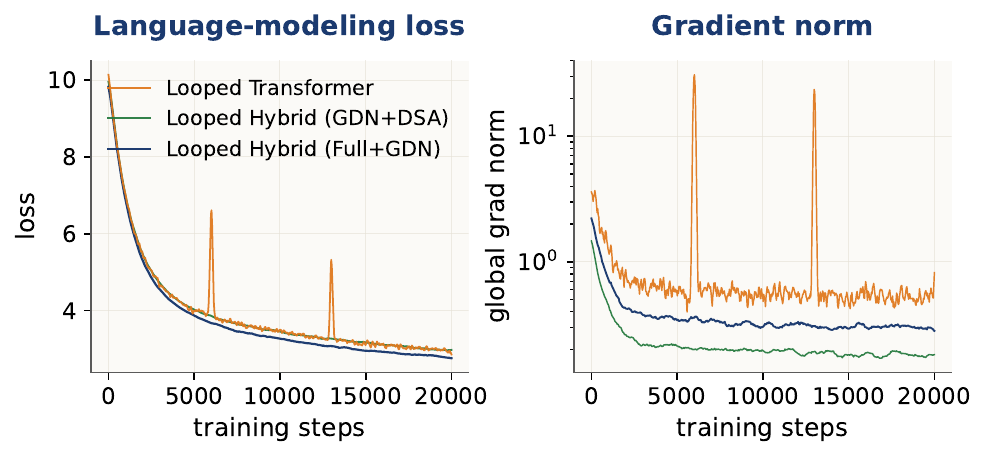}
    \caption{Both hybrid variants match or beat the Looped Transformer in loss while producing smaller and smoother gradient norms throughout training.}
    \label{fig:hybrid-stability}
    \vspace{-0.5em}
\end{wrapfigure}
Figure~\ref{fig:hybrid-stability} shows that the two hybrid configurations inherit the best of both worlds. Looped Hybrid (GDN+DSA) and Looped Hybrid (Full+GDN) both track the Looped Transformer in loss from the very beginning of training and pull slightly ahead by the end, while their gradient norms remain consistently smaller and free of the spikes that the full-attention loop occasionally produces. Pairing a recurrent mixer with either sparse or dense attention thus appears to regularize the loop: the linear branch keeps the gradient norm bounded across iterations, while the attention branch supplies the precise retrieval that pure linear models lack.

Across all three comparisons the same picture emerges. Mixers with data-dependent gating and a delta rule (GDN, and the hybrids that include it) train more stably than vanilla full attention under looping, and sparse attention, while less expressive, never destabilizes. This motivates the two LT2 instantiations used in the rest of the paper: \emph{LT2-sparse} (Looped Hybrid with DSA) when stability is the priority, and \emph{LT2-linear} (Looped Hybrid with GDN) when capability is.

\subsection{Synthetic Tasks: State-tracking + Recall}
\label{sec:recall-state}

State tracking and long-range retrieval are usually treated as opposite
stress tests: state tracking favors recurrent depth, while retrieval
favors precise attention to a long history. The first synthetic
experiment we use to probe LT2 puts both pressures on the same task. We
follow the \emph{state-based recall} construction of Olmo-hybrid~\citep{merrill2026olmohybridtheorypractice}.
\begin{center}
\begin{tcolorbox}[colback=white, colframe=lt2navy!60, arc=2pt, boxrule=0.4pt,
    left=6pt, right=6pt, top=4pt, bottom=4pt, width=0.96\linewidth,
    fontupper=\ttfamily\footnotesize\linespread{1.05}\selectfont]
\textcolor{lt2navy}{\textbf{bits}} = [\,\textcolor{lt2good}{\textbf{1}},\textcolor{lt2bad}{\textbf{0}},\textcolor{lt2good}{\textbf{1}},\textcolor{lt2good}{\textbf{1}},\textcolor{lt2bad}{\textbf{0}},\textcolor{lt2bad}{\textbf{0}},\textcolor{lt2good}{\textbf{1}},\textcolor{lt2bad}{\textbf{0}},\textcolor{lt2good}{\textbf{1}},\textcolor{lt2good}{\textbf{1}},\textcolor{lt2bad}{\textbf{0}},\textcolor{lt2good}{\textbf{1}},\textcolor{lt2bad}{\textbf{0}},\textcolor{lt2good}{\textbf{1}},\textcolor{lt2good}{\textbf{1}},\textcolor{lt2bad}{\textbf{0}},\textcolor{lt2bad}{\textbf{0}},\textcolor{lt2good}{\textbf{1}},\textcolor{lt2bad}{\textbf{0}},\textcolor{lt2good}{\textbf{1}},\textcolor{lt2bad}{\textbf{0}},\textcolor{lt2good}{\textbf{1}},\textcolor{lt2good}{\textbf{1}},\textcolor{lt2bad}{\textbf{0}},\textcolor{lt2good}{\textbf{1}},\textcolor{lt2bad}{\textbf{0}},\textcolor{lt2good}{\textbf{1}},\textcolor{lt2good}{\textbf{1}},\textcolor{lt2bad}{\textbf{0}},\textcolor{lt2bad}{\textbf{0}},\textcolor{lt2good}{\textbf{1}},\textcolor{lt2bad}{\textbf{0}}\,]\hfill\textcolor{lt2russet}{\itshape\rmfamily\footnotesize\textleftarrow\ recall target}\\
a, b, c, d, e = 11, 27, 4, 19, 30\hfill\textcolor{lt2russet}{\itshape\rmfamily\footnotesize\textleftarrow\ pointer state: (a,b,c,d,e)=(11,27,4,19,30)}\\
a, c = c, a;\hfill\textcolor{lt2russet}{\itshape\rmfamily\footnotesize swap a$\leftrightarrow$c\ \ \textrightarrow\ \ a=4,\ c=11}\\
b, d = d, b;\hfill\textcolor{lt2russet}{\itshape\rmfamily\footnotesize swap b$\leftrightarrow$d\ \ \textrightarrow\ \ b=19,\ d=27}\\
a, e = e, a;\hfill\textcolor{lt2russet}{\itshape\rmfamily\footnotesize swap a$\leftrightarrow$e\ \ \textrightarrow\ \ a=30,\ e=4}\\
c, d = d, c;\hfill\textcolor{lt2russet}{\itshape\rmfamily\footnotesize swap c$\leftrightarrow$d\ \ \textrightarrow\ \ c=27,\ d=11}\\
d, e = e, d;\hfill\textcolor{lt2russet}{\itshape\rmfamily\footnotesize swap d$\leftrightarrow$e\ \ \textrightarrow\ \ d=4,\ e=11}\\
a, b = b, a;\hfill\textcolor{lt2russet}{\itshape\rmfamily\footnotesize swap a$\leftrightarrow$b\ \ \textrightarrow\ \ a=19,\ b=30}\\
b, c = c, b;\hfill\textcolor{lt2russet}{\itshape\rmfamily\footnotesize swap b$\leftrightarrow$c\ \ \textrightarrow\ \ b=27,\ c=30}\\
a, d = d, a;\hfill\textcolor{lt2russet}{\itshape\rmfamily\footnotesize swap a$\leftrightarrow$d\ \ \textrightarrow\ \ a=4,\ d=19}\\
\colorbox{lt2cream}{\hspace{1pt}assert bits[a] == \colorbox{lt2good!18}{\textcolor{lt2good}{\textbf{?}}}\hspace{1pt}}\hfill\textcolor{lt2russet}{\itshape\rmfamily\footnotesize final query: a=4, so target = bits[4] = \textbf{0}}
\end{tcolorbox}
\end{center}
\paragraph{Task and setup.} Each example is a short Python-like program
(see below for $m{=}32$, $n{=}8$): a bit array \texttt{bits} of length $m$
is written, five variables \texttt{a}--\texttt{e} are bound to five
distinct indices in $[0, m)$, then $n$ swap lines are emitted with an
\texttt{assert bits[a] == ?} after every swap. The model is supervised on
each \texttt{?} token. The task combines \emph{long-range recall} (fetch
\texttt{bits[$\cdot$]} from $\Theta(m)$ tokens earlier, hard for
compressive RNNs) with \emph{state tracking} (apply the running sequence
of transpositions to the pointer, hard for fixed-depth Transformers in
$\mathrm{TC}^0$). We tie $n{=}m$ and grow them together along the
curriculum $\{8,16,32,64,128,256\}$; a model advances once eval accuracy
reaches $0.90$ within a $100$k-step budget. The headline metric is
$n_{\max}$, the largest $n{=}m$ solved. All models share a $4$-layer,
$256$-wide, $4$-head backbone (RoPE); loop variants share weights across
$T$ iterations of this backbone, $T{=}1$ is the standard non-loop model.
We train with AdamW (peak LR $3{\times}10^{-4}$, batch $32$).

\begin{wrapfigure}{r}{0.55\textwidth}
    \vspace{-1.0em}
    \centering
    \includegraphics[width=0.55\textwidth]{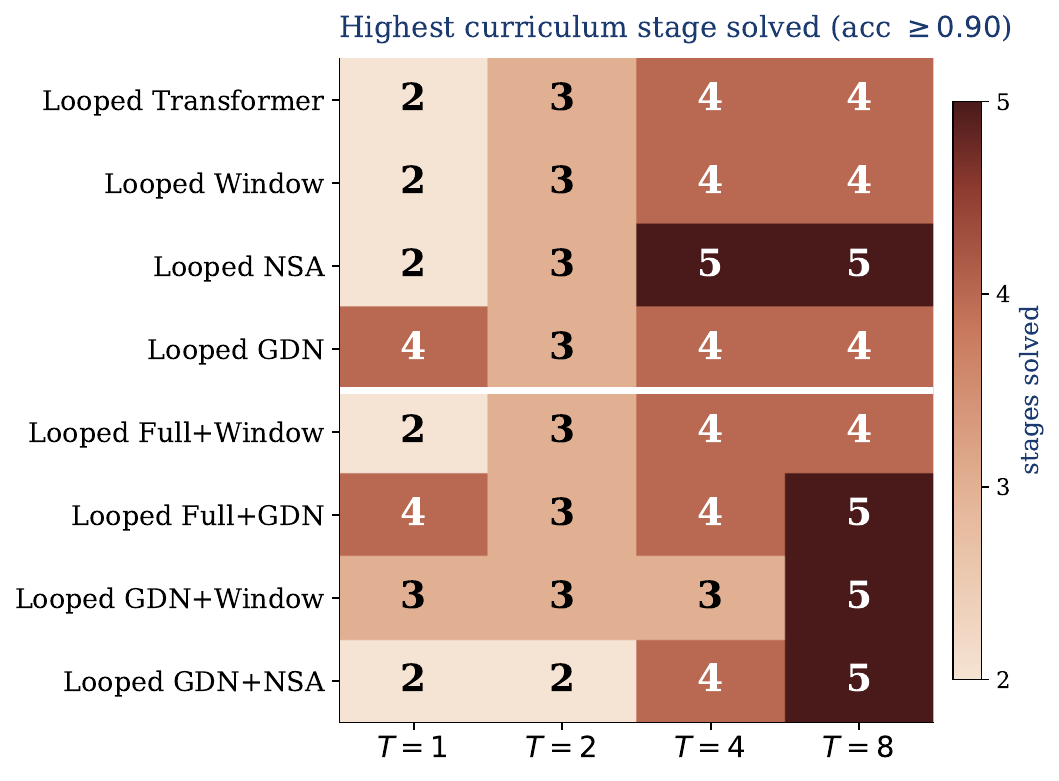}
    \caption{Highest curriculum stage $s$ solved wrt. loop count $T$. Pure mixers above the white line, hybrids
        below.}
    \label{fig:state-recall-results}
    \vspace{-0.8em}
\end{wrapfigure}
\paragraph{Effect of looping.} Figure~\ref{fig:state-recall-results}
reports the highest stage solved for every (architecture, $T$). The
striking pattern is that \emph{looping helps subquadratic mixers more
than it helps full attention.} Looped Transformer and Looped Full+Window
plateau at stage $4$ ($n_{\max}{=}64$) and never reach stage $5$ at any
$T$. By contrast, three subquadratic variants --- Looped NSA, Looped
GDN+Window, and Looped GDN+NSA --- all reach stage $5$
($n_{\max}{=}128$); Looped Full+GDN does too, but only because half its
block is already linear-time GDN. Looped GDN+Window is the most
dramatic: stage $3$ at $T{\leq}4$, stage $5$ at $T{=}8$. 

\paragraph{Comparison across architectures.} The reference
point is the Looped Transformer. Relative to it, several subquadratic
mixers gain \emph{both} expressive power and recall on this joint task.
Looped Transformer caps at stage $4$ at all $T$, but Looped NSA, Looped
GDN+Window, and Looped GDN+NSA all reach stage $5$ --- a doubling of
$n_{\max}$ over the global-attention baseline at the same parameter
budget. Among these, Looped GDN+NSA and GDN+Window are the fully linear-time
models to solve it, consistent with the great language modeling performance discussed above.

\subsection{Realistic recall and long-context retrieval}
\label{sec:long-context}

 We now turn to realistic long-context recall, where the model must retrieve specific facts from natural text far longer than fits comfortably into a recurrent state. We follow the evaluation protocol of Mamba-3~\citep{lahoti2026mamba3improvedsequencemodeling}. All models are at the $1.3$B scale. pure models stack a single mixer family throughout, while hybrids interleave that mixer with full attention in a fixed $4{:}1$ ratio, and looped variants share weights across $T{=}4$ iterations of the corresponding non-looped backbone at matched parameter count. We evaluate two complementary suites: knowledge-style recall on SWDE~\citep{arora2025simplelinearattentionlanguage}, SQuAD~\citep{rajpurkar2016squad100000questionsmachine}, FDA~\citep{arora2025simplelinearattentionlanguage}, TriviaQA~\citep{joshi2017triviaqalargescaledistantly}, Natural Questions~\citep{kwiatkowski-etal-2019-natural}, and DROP~\citep{dua2019dropreadingcomprehensionbenchmark} at $2048$ tokens, and precise needle-in-a-haystack retrieval (NIAH-Single-1/2/3)~\citep{hsieh2024rulerwhatsrealcontext} at $1024$, $2048$, and $4096$ tokens, where the $4096$ column is the most informative since all models were pre-trained at $2048$ and must extrapolate.

Three findings stand out (Table~\ref{tab:long-context}). Looping consistently improves the underlying mixer at matched parameter count: Looped Transformer, Looped GDN, and Looped Mamba-2 each gain roughly $2$--$4$ points on average over their non-looped counterparts on the knowledge-style suite, with task-level fluctuations in either direction (some columns such as TQA are already saturated, while FDA and NQ benefit more from extra iterations), and they retain the qualitative NIAH behavior of their base versions -- recurrent backbones extrapolate gracefully to $4096$ while dense-attention ones do not.

Looped Hybrid (GDN+DSA) tracks the Looped Transformer closely on the knowledge suite despite containing no quadratic component, and additionally extrapolates substantially better at NIAH-4096.
\begin{table*}[ht]
\centering
\caption{Long-context evaluation at $1.3$B parameters. Knowledge-style benchmarks (SWDE--DROP) are evaluated at $2048$ tokens; NIAH-Single-1/2/3 at $1024$, $2048$, and $4096$ tokens (models pre-trained at $2048$ must extrapolate to $4096$). Pure models use a single mixer; hybrid models interleave with full attention in a $1{:}1$ ratio. Looped variants share weights across $T{=}4$ iterations of the corresponding non-looped backbone. \textbf{Bold} marks the best result per column; \underline{underline} marks the second best.}
\label{tab:long-context}
\renewcommand{\arraystretch}{1.15}
\setlength{\tabcolsep}{4pt}
\resizebox{0.95\linewidth}{!}{%
\begin{tabular}{@{}l cccccc ccc ccc ccc@{}}
\toprule
\rowcolor{lt2navy!90}
\textcolor{white}{\textbf{Model (1.5B)}} &
\textcolor{white}{SWDE} & \textcolor{white}{SQD.} & \textcolor{white}{FDA} & \textcolor{white}{TQA} & \textcolor{white}{NQ} & \textcolor{white}{DROP} &
\multicolumn{3}{c}{\textcolor{white}{NIAH-Single-1}} &
\multicolumn{3}{c}{\textcolor{white}{NIAH-Single-2}} &
\multicolumn{3}{c}{\textcolor{white}{NIAH-Single-3}} \\
\rowcolor{lt2navy!90}
\textcolor{white}{Context Length} &
\multicolumn{6}{c}{\textcolor{white}{2048}} &
\textcolor{white}{1024} & \textcolor{white}{2048} & \textcolor{white}{4096} &
\textcolor{white}{1024} & \textcolor{white}{2048} & \textcolor{white}{4096} &
\textcolor{white}{1024} & \textcolor{white}{2048} & \textcolor{white}{4096} \\
\midrule

\rowcolor{lt2bandblue}
Transformer
  & 48.9 & 46.6 & 58.4 & 67.5 & 31.7 & 26.4
  & 100.0 & 100.0 & 0.0
  & 92.2 & 100.0 & 0.0
  & 98.6 & 99.4 & 0.0 \\

\rowcolor{lt2bandred}
GDN
  & 32.7 & 40.0 & 28.3 & 63.5 & 25.7 & 24.5
  & 100.0 & 100.0 & 99.8
  & 100.0 & 93.8 & 49.8
  & 83.8 & 68.4 & 34.2 \\
\rowcolor{lt2bandred}
Mamba-2
  & 30.7 & 39.1 & 23.7 & 64.3 & 25.1 & 28.5
  & 100.0 & 99.6 & 62.0
  & 100.0 & 53.8 & 11.8
  & 95.8 & 87.4 & 13.4 \\

\midrule

\rowcolor{lt2bandred}
Looped Transformer
  & \underline{52.8} & \textbf{49.4} & \underline{61.7} & \textbf{68.2} & \underline{33.6} & 28.1
  & 100.0 & 100.0 & 0.0
  & 94.6 & 100.0 & 0.0
  & 99.2 & \textbf{99.8} & 0.0 \\
\rowcolor{lt2bandred}
Looped GDN
  & 34.9 & 41.8 & 30.6 & 64.7 & 27.0 & 25.9
  & 100.0 & 100.0 & \textbf{99.8}
  & 100.0 & \underline{96.4} & 53.2
  & 85.6 & 71.0 & 35.8 \\
\rowcolor{lt2bandred}
Looped Mamba-2
  & 33.9 & 40.5 & 25.8 & 65.1 & 26.8 & \underline{29.7}
  & 100.0 & 100.0 & 65.7
  & 100.0 & 57.1 & 13.5
  & 96.2 & 88.1 & 16.2 \\

\midrule

\rowcolor{lt2bandgreen}
Looped Hybrid (GDN+DSA)
  & 51.6 & 48.0 & 60.4 & 66.9 & 33.0 & 28.4
  & 100.0 & 100.0 & 91.4
  & 100.0 & \textbf{100.0} & \underline{77.6}
  & \textbf{100.0} & \underline{99.6} & \underline{60.3} \\
\rowcolor{lt2cream}
\textbf{Looped Hybrid (Full+GDN)}
  & \textbf{53.1} & \underline{48.9} & \textbf{62.0} & \underline{67.8} & \textbf{34.0} & \textbf{30.2}
  & 100.0 & 100.0 & \underline{93.5}
  & 100.0 & \textbf{100.0} & \textbf{81.0}
  & \underline{99.8} & \textbf{99.8} & \textbf{63.7} \\

\bottomrule
\end{tabular}%
}
\end{table*}
Looped Hybrid (Full+GDN) is the strongest configuration overall, improving over the Looped Transformer on the average of the knowledge-style benchmarks while degrading more gracefully than the Looped Transformer at NIAH-4096 thanks to its GDN branch. As before, individual cells fluctuate across this broad suite, but the ordering on the aggregate is stable and matches the picture from Section~\ref{sec:lm}: the linear--sparse loop (LT2-sparse) recovers full-attention quality, and the full--linear loop (LT2-linear) extends the Pareto frontier.

\section{Distilling Looped Transformers into LT2}
\label{sec:distillation}

A natural follow-up to the from-scratch results above is whether LT2
can also be reached \emph{post-hoc}, by replacing most of a pre-trained
Looped Transformer's quadratic attention with linear-time attention variants.
We extend the hybrid-distillation recipe of non-looped models~\citep{li2025distillinghybridattentionmodels} to the looped setting
with the full attention ratio of $25\%$. The overall results is shown in Figure~\ref{fig:distill}.

\subsection{Algorithm}
\label{sec:distill-algo}

We follow the two-stage RADLADS-style pipeline~\citep{li2025distillinghybridattentionmodels}. The teacher $f_{\theta_\mathcal{T}}$ is Ouro-1.4B~\citep{zhu2025scalinglatentreasoninglooped} which is a standard looped transformer. The student
$f_{\theta_\mathcal{S}}$ shares the teacher's embedding, FFN, and norm
parameters but uses GDN as token mixer outside a small kept-attention
set $\mathcal{F}\!\subseteq\!\{1,\dots,N\}$.
At loop iteration $\tau\!\in\!\{1,\dots,T\}$ each network emits logits
$z^{(\tau)}_{\mathcal{T}/\mathcal{S}}\!\in\!\mathbb{R}^{L\times V}$, where $L$ is the sequence length and $V$ is the vocabulary size.
\begin{figure}[t]
 \vspace{-0.3cm}
  \centering
  \includegraphics[width=\linewidth]{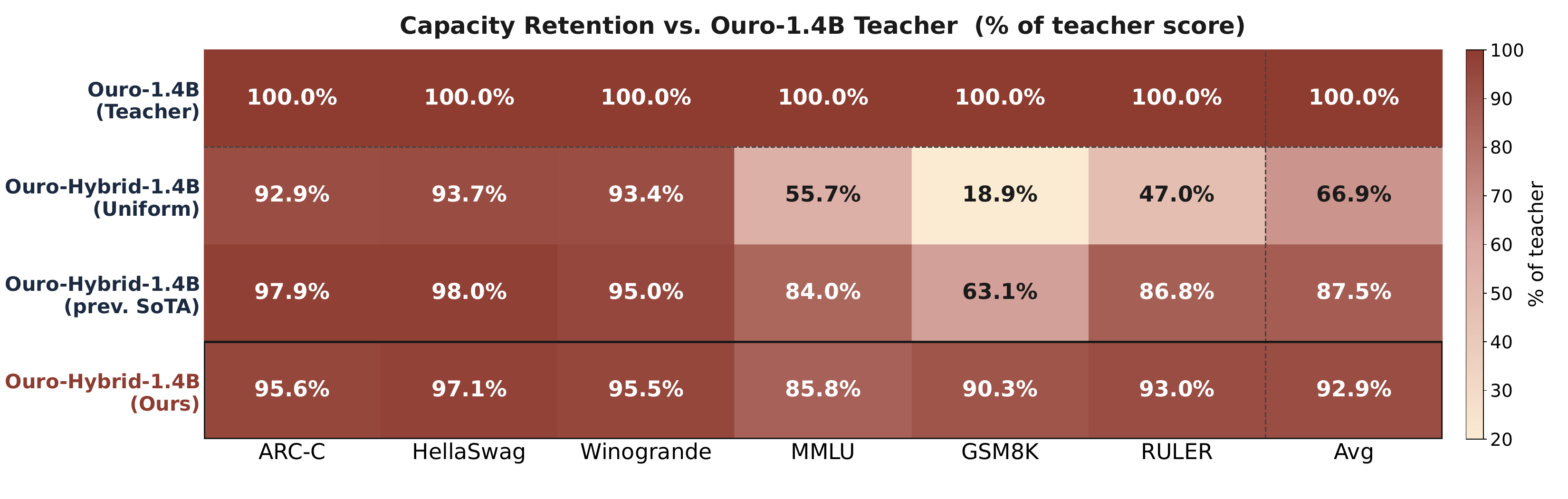}
  \vspace{-0.7cm}
  \caption{\textbf{Capability retention of distilled
    Ouro-Hybrid-1.4B variants.}
    \emph{Ouro-Hybrid (Uniform)} interleaves linear and full-attention layers in a fixed interleaved pattern, while \emph{Ouro-Hybrid (prev.\ SoTA)} selects the layers to retain full attention using the best method previously~\citep{li2025distillinghybridattentionmodels}.}
  \label{fig:distill}
  \vspace{-0.3cm}
\end{figure}

\textbf{Stage~1 (linear pre-alignment).} With $\mathcal{F}\!=\!\varnothing$, we align each GDN block to the
corresponding teacher attention output via an MSE loss on the residual
stream ($100$M tokens, length $512$). 

\textbf{Stage~2 (hybrid logit distillation).} We restore the $|\mathcal{F}|\!=\!6$ softmax layers selected by the
KL-guided selector to pick top full attention layers and
distill the teacher's logits.
The looped setting introduces a new design knob -- the
\emph{per-loop} KL schedule -- and we minimise
\begin{equation}
  \mathcal{L}_{\text{KD}}
  \;=\;
  \sum_{\tau=1}^{T} w^{(\tau)}_{t}\,
  \mathrm{KL}\!\left(
    \sigma_{\mathrm{top}\text{-}k}\!\big(z^{(\tau)}_\mathcal{T}/T_{\!\mathrm{kd}}\big)\,\Big\|\,
    \sigma_{\mathrm{top}\text{-}k}\!\big(z^{(\tau)}_\mathcal{S}/T_{\!\mathrm{kd}}\big)
  \right),
  \label{eq:kd-loss}
\end{equation}
where $\sigma_{\mathrm{top}\text{-}k}$ renormalises the softmax over
the teacher's top-$k$ tokens and
$\mathbf{w}_t\!\in\!\Delta^{T-1}$ controls how much supervision each
loop receives at step $t$. We progressively warm up the loop-level supervision, set uniform weights to supervise equally per loop for half of the training steps, and then switch to final-output supervision only ($600$M tokens, length $4096$).
\begin{figure}[b]
\vspace{-0.5cm}
  \centering
  \includegraphics[width=\linewidth]{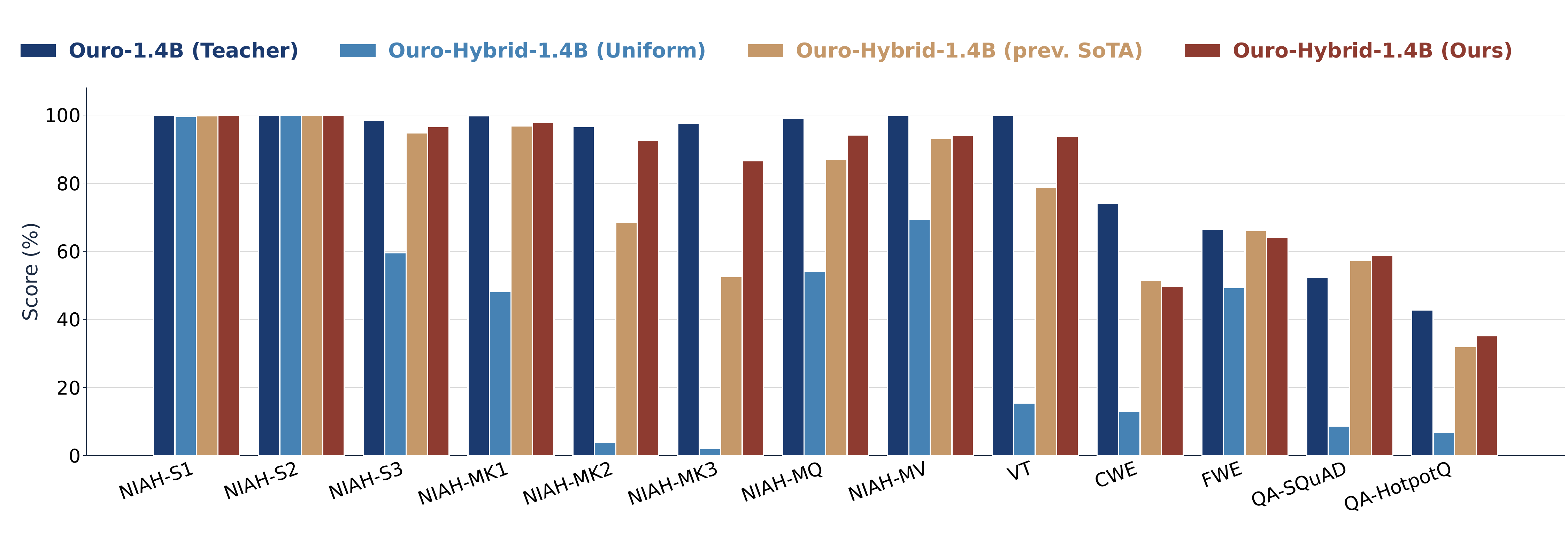}
  \vspace{-0.7cm}
 \caption{Ruler subtask performance for different distillation models. The key task differences lie in multi-key retrieval, which benefits more from per-loop supervision.}
  \label{fig:ruler-subtask}
\vspace{-0.3cm}
\end{figure}

\textbf{Stage 3 (Long-context continuation).} We then
extend Stage 2 with a continuation phase on $35$K OpenThoughts-v3~\citep{guha2025openthoughtsdatarecipesreasoning} with 32k sequence length, using the same KD loss in
Eq.~\eqref{eq:kd-loss} and a constant LR ($600$M tokens, length $32768$).

\subsection{Results}

Distilling pre-trained full-attention models into linear-time variants remains non-trivial. We find that the primary driver of performance on general and mathematical benchmarks is the composition of the distillation data. Stages 1 and 2 utilize general datasets like DCLM~\citep{li2025datacomplmsearchgenerationtraining}, resulting in high scores on commonsense downstream tasks. Integrating reasoning-specific data like OpenThoughts in Stage 3 significantly narrows the gap in mathematical reasoning. Two factors are important from an algorithmic perspective. As shown in Figure~\ref{fig:ruler-subtask}, progressive length expansion is essential for maintaining long-context performance. Furthermore, per-loop supervision provides a more stable gradient signal than supervising the final loop alone. We recommend this multi-stage recipe for researchers distilling looped architectures.

Finally, we compare our distilled model against industry-level small language models (Figure~\ref{fig:main}), demonstrating highly competitive performance. We will release the full model checkpoints to the public to foster further research into efficient, high-capability small models.

\section{Related Work}

\paragraph{Looped Transformers and the path to scalable recursion.}
Universal Transformers~\citep{dehghani2019universaltransformers} reintroduced depth-wise recurrence to the Transformer by tying the weights of every layer~\cite{bai2019deepequilibriummodels} and iterating the same block for a fixed or adaptive number of steps. Early follow-ups showed that this simple inductive bias improves systematic generalization on compositional benchmarks~\citep{csordas-etal-2021-devil} and that, more broadly, parameter sharing across layers is a viable design choice rather than a curiosity~\citep{takase-kiyono-2023-lessons}. On the theoretical side, looped architectures are Turing-complete under mild assumptions~\citep{perez2019turingcompletenessmodernneural} and can be programmed to emulate iterative algorithms such as multi-step gradient descent~\citep{giannou2023looped,yanglooped,gatmiry2024can,gatmiryrole,fanlooped}, which has been formalized as a ``latent thought'' view of looping where each pass refines an internal computation~\citep{saunshi2025reasoninglatentthoughtspower,gong2026makesloopedtransformersperform,blayney2026mechanisticanalysisloopedreasoning,chen2026loopbridgeloopedtransformers}. This perspective has driven a recent wave of recursion-centric reasoning models, including HRM~\citep{wang2025hierarchicalreasoningmodel}, TRM~\citep{jolicoeurmartineau2025morerecursivereasoningtiny}, the Universal Reasoning Model~\citep{gao2025universalreasoningmodel}, and looped language models trained at scale~\citep{zhu2025scalinglatentreasoninglooped}. The central obstacle, however, is scalability: looping the same block $T$ times multiplies compute by $T$ without adding parameters, so naive Universal Transformers underperform standard Transformers under matched FLOPs~\citep{tay-etal-2023-scaling,prairie2026parcaescalinglawsstable}. Three lines of work directly target this efficiency gap. The first injects sparsity and mixture-of-experts into the shared block so that capacity grows without proportional compute, as in the Sparse Universal Transformer~\citep{tan-etal-2023-sparse}, MoEUT~\citep{csordas2024moeut}, Mixture of Universal Experts~\citep{chen2026mixtureuniversalexpertsscaling}, and parameter-efficient FFN reuse~\citep{nie2026versatileffnachievingparameterefficiency}. The second relaxes strict weight tying with low-rank deltas so that each iteration can specialize cheaply~\citep{bae2025relaxedrecursivetransformerseffective}. The third, most directly in the spirit of Adaptive Computation Time, allocates a variable number of recursion steps per token: Mixture-of-Recursions learns dynamic per-token depths in a token-level routing framework~\citep{bae2025mixture}, while elastic and depth-recurrent variants extend this idea to vision and attention-aware latent reasoning~\citep{goyal2026eltelasticloopedtransformers,knupp2026depthrecurrentattentionmixturesgiving,yu2026spiralformerloopedtransformerslearn,shu2026loopvitscalingvisualarc}. Recent work has further accelerated inference of recurrent-depth models through parallel sampling~\citep{geiping2025efficientparallelsamplersrecurrentdepth}. Our work continues this trajectory, focusing on how to make looped computation scale predictably under a fixed compute budget.

\paragraph{Subquadratic attention.}
A parallel line of research replaces softmax attention with sequence mixers whose cost is linear, or near-linear, in the sequence length. Linear attention~\citep{katharopoulos2020transformers} expresses attention as a kernel feature map and reformulates inference as a recurrent state update, which is interpreted as fast-weight programming and trace back to earlier work on associative fast weights~\citep{schlag2021lineartransformerssecretlyfast,irie2021goinglineartransformersrecurrent,ba2016usingfastweightsattend}. From this foundation a family of efficient recurrences has emerged, including RetNet~\citep{sun2023retentive}, Gated Linear Attention~\citep{yanggated}, HGRN2~\citep{qin2024hgrn2}, DeltaNet and its parallel and gated variants~\citep{yang2024parallelizing,yang2024gated}, the SSM-attention duality of Mamba-2~\citep{dao2024transformersssmsgeneralizedmodels} and its successor Mamba-3~\citep{lahoti2026mamba3improvedsequencemodeling}, and RWKV-7~\citep{peng2025rwkv7gooseexpressivedynamic}. Recent work has also addressed the limited state-tracking capacity of these recurrences through negative eigenvalues~\citep{grazzi2025unlockingstatetrackinglinearrnns} and Householder products~\citep{siems2025deltaproductimprovingstatetrackinglinear}. Because pure linear-attention models still lag softmax attention on tasks requiring exact recall~\citep{arora2025simplelinearattentionlanguage}, a complementary line interleaves linear and softmax layers in hybrid stacks such as Jamba~\citep{lieber2024jambahybridtransformermambalanguage}, Kimi Linear~\citep{kimiteam2025kimilinearexpressiveefficient}, and Olmo Hybrid~\citep{merrill2026olmohybridtheorypractice}, or distills pretrained Transformers into hybrid or linear successors~\citep{goldstein2026radladsrapidattentiondistillation,li2025distillinghybridattentionmodels}. A third strand keeps softmax attention but enforces sparsity to reduce its quadratic cost, ranging from attention sinks for streaming inference~\citep{xiao2024efficientstreaminglanguagemodels} to natively trainable block-sparse patterns~\citep{yuan2025nativesparseattentionhardwarealigned,chen2025powerattentionexponentiallyscalingreceptive,deepseekai2025deepseekv32pushingfrontieropen}. These approaches are largely orthogonal to depth-wise recursion: they reduce the cost of a single forward pass, whereas looping reuses parameters across passes. Combining the two is a natural direction, and one we explore in this work.

\section{Conclusion}
We presented LT2, a family of linear-time looped Transformers that replace the quadratic token-mixing bottleneck in looped architectures with linear, sparse, and hybrid attention mechanisms. In particular, our hybrid variants recover or exceed the quality of full-attention looped Transformers while substantially improving inference efficiency. These results suggest that efficient token mixers can make recursive depth a practical scaling axis for future language models. 

\textbf{Limitations.} Two directions remain unexplored. First, we study depth-level hybridization and simple loop-level schedules but do not investigate full loop-level hybridization, where different iterations could use distinct attention families rather than only varying masks. Second, we do not design explicit cross-loop recurrent state carry mechanisms; principled state-sharing across loops may further improve long-context modeling, memory reuse, and compute efficiency.
\bibliography{references}
\bibliographystyle{abbrv}
\clearpage
\appendix
\section{Adaptive Computation Time for Looped Transformers}
\label{app:act}

This appendix gives a description of Adaptive Computation Time (ACT) for Looped Transformers and explains why we use a fixed number of loop iterations during pre-training, as stated in Section~\ref{sec:method}.

\subsection{Per-Token, Input-Dependent Compute Allocation}
\label{app:act:adaptivity}

In the basic Looped Transformer of Eq.~\eqref{eq:lt}, every token at every input is processed through exactly $T$ loop iterations. ACT relaxes this constraint by letting the number of iterations depend on (i) the specific input sequence and (ii) the token position $t \in \{1, \dots, L\}$ within that sequence. We refer to this property as \emph{adaptivity}: the model can spend more compute on token positions whose representations are still changing substantially between iterations and stop early on positions whose representations have already stabilized. Concretely, two tokens in the same sequence may halt at different iterations $\tau_t$, and the same token may halt at different iterations across different inputs.

\subsection{Halting Probabilities and Halting Rule}
\label{app:act:halting}

Following~\citep{gao2025universalreasoningmodel}, ACT introduces a small auxiliary network---a single linear layer followed by a sigmoid in our implementation---that maps the hidden state at position $t$ and iteration $\tau$ to a scalar in $(0,1)$:
\begin{equation}
    p_t^{(\tau)} = \sigma\!\left( \mathbf{w}^\top \mathbf{h}_t^{(\tau)} + b \right) \in (0,1),
    \label{eq:halting-prob}
\end{equation}
where $\mathbf{w} \in \mathbb{R}^d$ and $b \in \mathbb{R}$ are learned parameters shared across positions and iterations. We call $p_t^{(\tau)}$ the \emph{halting probability}: it represents the model's estimate, conditioned on the current hidden state, of how likely it is that no further loop iterations are needed for position $t$.

Iterations for position $t$ accumulate until the running sum of halting probabilities crosses a threshold $1 - \epsilon$:
\begin{equation}
    \tau_t \;=\; \min\Bigl\{\, \tau \;\Big|\; \sum_{\tau' = 1}^{\tau} p_t^{(\tau')} \;\geq\; 1 - \epsilon \,\Bigr\},
    \label{eq:halting-rule}
\end{equation}
where $\epsilon$ is a small positive constant (we use $\epsilon = 0.01$, matching the value used by~\citep{dehghani2019universaltransformers}). Once position $t$ halts, its hidden state is frozen at $\mathbf{h}_t^{(\tau_t)}$ and does not participate in further updates. The threshold form $1 - \epsilon$, rather than exactly $1$, ensures that the rule can be triggered after a single iteration if $p_t^{(1)}$ is sufficiently large; without the slack $\epsilon$, at least two iterations would always be required because $p_t^{(\tau)} < 1$ by construction.

\subsection{Pondering Cost and Turing Completeness}

To prevent the model from trivially driving every $p_t^{(\tau)}$ to a small value and using the maximum allowed number of iterations on every token, ACT adds a \emph{pondering cost} to the training loss that penalizes the expected number of iterations per position. The pondering cost is weighted by a scalar hyperparameter that controls the trade-off between accuracy and compute.

The combination of (i) input-dependent halting and (ii) unbounded effective depth is what allows looped Transformers with ACT to simulate arbitrary Turing machines, as shown by~\citep{perez2019turingcompletenessmodernneural}. Without ACT, a Looped Transformer with a fixed number of iterations $T$ has bounded computational depth and is therefore not Turing complete in the same formal sense.

\subsection{Why We Use Fixed Iterations During Pre-training}

Despite its theoretical appeal, ACT introduces several practical difficulties at pre-training scale:
\begin{itemize}
    \item \textbf{Optimization instability.} The halting rule in Eq.~\eqref{eq:halting-rule} is non-differentiable, and the standard relaxation used by~\citep{dehghani2019universaltransformers} couples the gradient of the pondering cost with the gradients of the main loss in ways that can produce sudden shifts in the average number of iterations during training.
    \item \textbf{Sensitivity to the pondering weight.} Small changes in the pondering hyperparameter can move the model between two degenerate regimes: halting after a single iteration on every token, or never halting until the maximum iteration cap is reached.
    \item \textbf{Throughput loss from ragged halting.} When different positions in the same batch halt at different iterations, the implementation must either pad to the longest unhalted position (losing the compute savings ACT was meant to provide) or use specialized ragged kernels.
\end{itemize}

Ouro~\citep{zhu2025scalinglatentreasoninglooped} report these instabilities for the Ouro model family at pre-training scale, and our preliminary experiments reproduced the same behavior. We therefore use a fixed number of loop iterations $T$ throughout pre-training in the main paper, and leave a stable ACT variant for future work.

\section{Proofs for Section~\ref{sec:loop-gain}}
\label{app:loop-gain}

\subsection{\texorpdfstring{Loop $\times$ DPLR linear attention: expressivity analysis}{Loop × DPLR linear attention: expressivity analysis}}
\label{app:loop-dplr}

In this section, we analyze how unrolling a DPLR linear-attention block for $T$ iterations enriches its state-transition operator. We show three things in sequence: (i) a single block applies a rank-$1$ update to the recurrent state, while $T$ stacked blocks compose into an update of rank up to $T$; (ii) by the Cartan--Dieudonn\'e theorem, this composition is expressive enough to realize any orthogonal transformation in $\mathrm{O}(d_k)$ once $T \geq d_k$; and (iii) as a concrete consequence, a single looped layer can compute prefix products for the symmetric group $S_n$ whenever $T \geq n-1$. Throughout, the spectral norm of the operator stays bounded by $1$, so stability is preserved.

\paragraph{Setup.}
An LT$_2$-LA layer with a DPLR mixer maintains a recurrent state $\mathbf{S}_{t} \in \mathbb{R}^{d_k \times d_v}$ that evolves as
\begin{equation}
\label{eq:dplr-app}
\mathbf{S}_{t} \;=\; \mathbf{A}_{t}\,\mathbf{S}_{t-1} \;+\; \beta_{t}\,\mathbf{k}_{t}\mathbf{v}_{t}^{\top}, 
\qquad 
\mathbf{A}_{t} \;=\; \mathrm{Diag}(\boldsymbol{\alpha}_{t})\,\bigl(\mathbf{I} - \beta_{t}\mathbf{k}_{t}\mathbf{k}_{t}^{\top}\bigr),
\end{equation}
where the symbols denote the following quantities:
\begin{itemize}
    \item $\mathbf{S}_t \in \mathbb{R}^{d_k \times d_v}$ is the recurrent state at step $t$, with key dimension $d_k$ and value dimension $d_v$;
    \item $\mathbf{k}_t \in \mathbb{R}^{d_k}$ is a unit-norm key vector ($\|\mathbf{k}_t\|_2 = 1$);
    \item $\mathbf{v}_t \in \mathbb{R}^{d_v}$ is the value vector;
    \item $\beta_t \in [0, 2]$ is a scalar gain controlling the strength of the rank-$1$ update;
    \item $\boldsymbol{\alpha}_t \in [0,1]^{d_k}$ is a per-channel decay vector, applied as a diagonal matrix $\mathrm{Diag}(\boldsymbol{\alpha}_t) \in \mathbb{R}^{d_k \times d_k}$;
    \item $\mathbf{I} \in \mathbb{R}^{d_k \times d_k}$ is the identity matrix.
\end{itemize}
Geometrically, the factor $(\mathbf{I} - \beta_t \mathbf{k}_t \mathbf{k}_t^{\top})$ is a generalized Householder transformation: it shrinks the component of any vector along $\mathbf{k}_t$ (by a factor $1-\beta_t$) and leaves the orthogonal complement untouched. Multiplication by $\mathrm{Diag}(\boldsymbol{\alpha}_t)$ then applies a per-channel decay. The whole map $\mathbf{A}_t \in \mathbb{R}^{d_k \times d_k}$ is therefore a \emph{rank-$1$ perturbation of a diagonal matrix}.

When the same block is unrolled for $T$ loops at step $t$ with loop-indexed parameters $\{\boldsymbol{\alpha}_t^{(\tau)}, \beta_t^{(\tau)}, \mathbf{k}_t^{(\tau)}\}_{\tau=1}^{T}$, the cumulative state-transition operator becomes
\begin{equation}
\label{eq:Aeff}
\mathbf{A}_{t}^{\mathrm{eff}} \;=\; \prod_{\tau=1}^{T}\mathbf{A}_{t}^{(\tau)} 
\;=\; \prod_{\tau=1}^{T} \mathrm{Diag}\!\bigl(\boldsymbol{\alpha}_{t}^{(\tau)}\bigr)\Bigl(\mathbf{I} - \beta_{t}^{(\tau)}\mathbf{k}_{t}^{(\tau)}\mathbf{k}_{t}^{(\tau)\top}\Bigr) \;\in\; \mathbb{R}^{d_k \times d_k}.
\end{equation}

\paragraph{From rank-$1$ updates to rank-$T$ updates.}
A single $\mathbf{A}_t$ touches $\mathbf{S}_{t-1}$ along exactly one direction $\mathbf{k}_t$ (plus a channel decay), so it is a rank-$1$ correction to a diagonal map. Stacking $T$ such factors performs $T$ rank-$1$ corrections in succession; whether these corrections collapse or accumulate depends entirely on the geometry of the keys $\{\mathbf{k}_t^{(\tau)}\}_{\tau=1}^{T}$.

 The two extremes make the picture clear (these special cases of generalized Householder products are well known; see e.g.\ \citep{grazzi2025unlockingstatetrackinglinearrnns}):
\begin{itemize}
    \item \emph{Identical keys ($\mathbf{k}_t^{(1)} = \cdots = \mathbf{k}_t^{(T)} = \mathbf{k}$).} The product collapses: $\prod_{\tau=1}^{T} (\mathbf{I} - \beta_t^{(\tau)} \mathbf{k}\mathbf{k}^{\top}) = \mathbf{I} - \beta^{*} \mathbf{k}\mathbf{k}^{\top}$ for some scalar $\beta^*$. The rank stays at $1$ and looping buys no expressivity.
    \item \emph{Mutually orthogonal keys ($\mathbf{k}_t^{(i)\top} \mathbf{k}_t^{(j)} = 0$ for $i \neq j$).} The factors commute and combine cleanly into $\mathbf{I} - \sum_{\tau=1}^{T} \beta_t^{(\tau)} \mathbf{k}_t^{(\tau)} \mathbf{k}_t^{(\tau)\top}$, which is symmetric with rank exactly $T$ in the perturbation term.
\end{itemize}
The take-away is that loop-induced rank is governed by the geometry of the keys across loop steps. This bodes well in practice: in high-dimensional spaces ($d_k$ large), independently drawn unit vectors are nearly orthogonal with overwhelming probability. So for language modeling, where keys at different loop steps are generated from learned projections of the input, we should expect the keys to be approximately linearly independent and the effective rank of $\mathbf{A}_t^{\mathrm{eff}}$ to be close to $T$ rather than $1$.

\paragraph{Reflections, rotations, and arbitrary orthogonal maps.}
Stepping up from rank to geometry, we now compare the transformations reachable by a single block versus $T$ stacked blocks. With $\boldsymbol{\alpha}_t^{(\tau)} = \mathbf{1}$ and $\beta_t^{(\tau)} = 2$, each factor $\mathbf{I} - 2\mathbf{k}_t^{(\tau)} \mathbf{k}_t^{(\tau)\top}$ is a Householder reflection. A single block can therefore realize any reflection, but cannot represent a rotation (rotations have determinant $+1$, reflections have determinant $-1$). Loop unrolling lifts this restriction:

\begin{lemma}[Coordinate transpositions via reflection]
\label{lem:transposition}
The permutation matrix $\mathbf{P}_{(i,j)} \in \{0,1\}^{d_k \times d_k}$ that swaps coordinates $i$ and $j$ is realized by a single DPLR factor with $\boldsymbol{\alpha} = \mathbf{1}$, $\beta = 2$, and $\mathbf{k} = \tfrac{1}{\sqrt{2}}(\mathbf{e}_i - \mathbf{e}_j)$, where $\mathbf{e}_i, \mathbf{e}_j \in \mathbb{R}^{d_k}$ are standard basis vectors.
\end{lemma}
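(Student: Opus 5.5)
The plan is a direct computation with the DPLR factor of Eq.~\eqref{eq:dplr-app}. Setting $\boldsymbol{\alpha}=\mathbf{1}$ makes $\mathrm{Diag}(\boldsymbol{\alpha})=\mathbf{I}$, so the factor reduces to the Householder form $\mathbf{A}=\mathbf{I}-2\mathbf{k}\mathbf{k}^{\top}$. First we check admissibility. With $\mathbf{k}=\tfrac{1}{\sqrt2}(\mathbf{e}_i-\mathbf{e}_j)$ and $i\neq j$, we get $\|\mathbf{k}\|_2^2=\tfrac12(1+1)=1$. So $\mathbf{k}$ is a unit key as the setup requires, and $\beta=2$ lies in the allowed range $[0,2]$.

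Next we expand $\mathbf{k}\mathbf{k}^{\top}=\tfrac12(\mathbf{e}_i\mathbf{e}_i^{\top}+\mathbf{e}_j\mathbf{e}_j^{\top}-\mathbf{e}_i\mathbf{e}_j^{\top}-\mathbf{e}_j\mathbf{e}_i^{\top})$, which gives
\[
\mathbf{I}-2\mathbf{k}\mathbf{k}^{\top}
=\mathbf{I}-\mathbf{e}_i\mathbf{e}_i^{\top}-\mathbf{e}_j\mathbf{e}_j^{\top}+\mathbf{e}_i\mathbf{e}_j^{\top}+\mathbf{e}_j\mathbf{e}_i^{\top}.
\]
This is exactly the matrix that agrees with the identity off the $\{i,j\}$ block and has $\bigl(\begin{smallmatrix}0&1\\1&0\end{smallmatrix}\bigr)$ on that block, i.e.\ $\mathbf{P}_{(i,j)}$. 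We will confirm it from both sides. Entrywise, the diagonal entries at $i$ and $j$ become $0$, the entries $(i,j)$ and $(j,i)$ become $1$, and every other entry is unchanged. Equivalently, on basis vectors the map sends $\mathbf{e}_i\mapsto\mathbf{e}_i-2\mathbf{k}\cdot\tfrac{1}{\sqrt2}=\mathbf{e}_j$, sends $\mathbf{e}_j\mapsto\mathbf{e}_i$, and fixes every $\mathbf{e}_m$ with $m\notin\{i,j\}$, since $\mathbf{k}^{\top}\mathbf{e}_m=0$.

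There is no real obstacle here. The only care needed is the normalization factor $\tfrac{1}{\sqrt2}$, without which $\beta=2$ would not give a reflection, and noting that $i\neq j$ is needed for $\mathbf{k}\neq 0$. We may also remark that $\det=-1$, consistent with a transposition being a reflection. That observation sets up the subsequent composition argument, where products of $n-1$ such factors generate $S_n$.
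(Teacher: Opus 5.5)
Your proposal is correct and matches the paper's proof: both expand $\mathbf{I}-2\mathbf{k}\mathbf{k}^{\top}=\mathbf{I}-(\mathbf{e}_i-\mathbf{e}_j)(\mathbf{e}_i-\mathbf{e}_j)^{\top}$ and read off that the $(i,i)$ and $(j,j)$ entries are zeroed while $1$'s appear at $(i,j)$ and $(j,i)$. Your extra checks (unit norm of $\mathbf{k}$, $\beta=2$ in the allowed range, the need for $i\neq j$, and the action on basis vectors) go beyond what the paper writes, but they are not needed for the identity itself.
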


\begin{proof}
Direct expansion: $\mathbf{I} - 2\mathbf{k}\mathbf{k}^{\top} = \mathbf{I} - (\mathbf{e}_i - \mathbf{e}_j)(\mathbf{e}_i - \mathbf{e}_j)^{\top}$. Subtracting $(\mathbf{e}_i - \mathbf{e}_j)(\mathbf{e}_i - \mathbf{e}_j)^{\top}$ zeroes out the diagonal entries at $(i,i)$ and $(j,j)$ and places $1$'s at the off-diagonal entries $(i,j)$ and $(j,i)$, exactly producing $\mathbf{P}_{(i,j)}$.
\end{proof}

\begin{theorem}[Universal orthogonal representation]
\label{thm:orth}
Let $T \geq d_k$. For every orthogonal matrix $\mathbf{Q} \in \mathrm{O}(d_k)$, there exists a configuration of per-loop parameters such that $\mathbf{A}_t^{\mathrm{eff}} = \mathbf{Q}$.
\end{theorem}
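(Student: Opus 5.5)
The plan is to reduce the statement to the Cartan--Dieudonn\'e theorem: every $\mathbf{Q}\in\mathrm{O}(d_k)$ is a product of at most $d_k$ Householder reflections. We then realize each reflection by one loop factor, and pad the remaining loops with identity factors.

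\emph{Step 1: choose the factors.} In every loop we set $\boldsymbol{\alpha}_t^{(\tau)}=\mathbf{1}$, so that $\mathrm{Diag}(\boldsymbol{\alpha}_t^{(\tau)})=\mathbf{I}$ and each factor in Eq.~\eqref{eq:Aeff} reduces to $\mathbf{I}-\beta_t^{(\tau)}\mathbf{k}_t^{(\tau)}\mathbf{k}_t^{(\tau)\top}$. Two choices are available inside the admissible parameter range:
\begin{itemize}
\item $\beta=2$ with any unit $\mathbf{k}$ gives the Householder reflection $\mathbf{H}_{\mathbf{k}}=\mathbf{I}-2\mathbf{k}\mathbf{k}^\top$, exactly as in Lemma~\ref{lem:transposition};
\item $\beta=0$ with any unit $\mathbf{k}$ gives $\mathbf{I}$.
\end{itemize}
Both $\beta=0$ and $\beta=2$ lie in $[0,2]$, so both are admissible.

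\emph{Step 2: establish the decomposition.} I would prove Cartan--Dieudonn\'e directly by induction on $d_k$, rather than only cite it. Let $\mathbf{Q}\in\mathrm{O}(d_k)$.
\begin{itemize}
\item If $\mathbf{Q}\mathbf{e}_1=\mathbf{e}_1$, set $\mathbf{Q}_1=\mathbf{Q}$ and use no reflection at this stage.
\item Otherwise, let $\mathbf{k}=(\mathbf{Q}\mathbf{e}_1-\mathbf{e}_1)/\|\mathbf{Q}\mathbf{e}_1-\mathbf{e}_1\|$. Since $\|\mathbf{Q}\mathbf{e}_1\|=\|\mathbf{e}_1\|$, a direct check gives $\mathbf{H}_{\mathbf{k}}\mathbf{Q}\mathbf{e}_1=\mathbf{e}_1$. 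Set $\mathbf{Q}_1=\mathbf{H}_{\mathbf{k}}\mathbf{Q}$.
\end{itemize}
In both cases $\mathbf{Q}_1$ is orthogonal and fixes $\mathbf{e}_1$, so it preserves $\mathbf{e}_1^\perp$ and acts on it as an element of $\mathrm{O}(d_k-1)$. A reflection of $\mathbf{e}_1^\perp$ along a unit $\mathbf{k}\in\mathbf{e}_1^\perp$ extends to $\mathbf{H}_{\mathbf{k}}$ on $\mathbb{R}^{d_k}$, and this extension fixes $\mathbf{e}_1$. Applying the induction hypothesis to $\mathbf{Q}_1$ on $\mathbf{e}_1^\perp$ therefore writes $\mathbf{Q}_1$ as a product of at most $d_k-1$ reflections. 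For the base case $d_k=1$, $\mathrm{O}(1)=\{\pm1\}$: $+1$ is the empty product and $-1=\mathbf{H}_{\mathbf{e}_1}$. Since reflections are involutions, $\mathbf{Q}=\mathbf{H}_{\mathbf{k}}\mathbf{Q}_1$ (or $\mathbf{Q}=\mathbf{Q}_1$), which gives
\[
\mathbf{Q}=\mathbf{H}_{\mathbf{k}_1}\cdots\mathbf{H}_{\mathbf{k}_m},\qquad m\le d_k .
\]

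\emph{Step 3: assign factors to loops.} We need to match the multiplication order of Eq.~\eqref{eq:Aeff}, which writes $\prod_{\tau=1}^{T}$ without specifying whether $\tau=1$ sits on the left or the right. Either convention works: place the reflections $\mathbf{H}_{\mathbf{k}_1},\dots,\mathbf{H}_{\mathbf{k}_m}$ in the corresponding order over loops $\tau=1,\dots,m$, and set $\beta_t^{(\tau)}=0$ for $\tau=m+1,\dots,T$. The hypothesis $T\ge d_k\ge m$ guarantees there are enough loops.

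The only delicate points are the base case, the two-case reduction at $\mathbf{e}_1$ (if $\mathbf{Q}\mathbf{e}_1=\mathbf{e}_1$ the normalization in Step 2 is undefined, which is why that case uses no reflection), and confirming that $\beta\in\{0,2\}$ is allowed; none is substantive. The statement requires only that the per-loop parameters exist, so we do not need to show the network can produce these keys from its inputs.
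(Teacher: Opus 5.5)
Your proposal is correct and follows the paper's proof: both use the Cartan--Dieudonn\'e decomposition into at most $d_k$ Householder reflections, realize each reflection as one loop factor with $\boldsymbol{\alpha}=\mathbf{1}$ and $\beta=2$, and fill the remaining loops with $\beta=0$ identity factors. The paper cites Cartan--Dieudonn\'e, whereas you prove it by induction and also handle the product-order convention explicitly; both additions are sound.
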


\begin{proof}
By the Cartan--Dieudonn\'e theorem, every orthogonal matrix $\mathbf{Q} \in \mathrm{O}(d_k)$ can be written as a product of at most $d_k$ Householder reflections:
\[
\mathbf{Q} \;=\; \prod_{\tau=1}^{m} \bigl(\mathbf{I} - 2\,\mathbf{k}^{(\tau)} \mathbf{k}^{(\tau)\top}\bigr), 
\qquad m \leq d_k,
\]
for some unit vectors $\mathbf{k}^{(\tau)} \in \mathbb{R}^{d_k}$. Set $\boldsymbol{\alpha}_t^{(\tau)} = \mathbf{1}$, $\beta_t^{(\tau)} = 2$, and $\mathbf{k}_t^{(\tau)} = \mathbf{k}^{(\tau)}$ for $\tau \leq m$, and set $\beta_t^{(\tau)} = 0$ (identity factor) for $\tau > m$. Substituting into~\eqref{eq:Aeff} gives $\mathbf{A}_t^{\mathrm{eff}} = \mathbf{Q}$.
\end{proof}

In particular, since rotations are products of an even number of reflections, two looped blocks suffice to realize any 2D rotation, four for any 3D rotation, and so on. This is the geometric content of the upgrade from rank-$1$ to rank-$T$: looping the same DPLR block converts a reflection-only operator into one that covers the full orthogonal group.

\subsection{Loop \texorpdfstring{$\times$}{×} Sparse Attention: Receptive-Field Analysis}
\label{app:loop-swa}

We analyse the receptive field of an LT${2}$-SA layer that uses a static causal
sparse attention pattern $\mathcal{M}=\{\mathcal{M}_i\}_{i=1}^{N}$, with
$\mathcal{M}_i\subseteq\{1,\dots,i\}$ the keys visible to query $i$, looped for
$T$ iterations. Following Power attention and related discussion over receptive field~\citep{chen2025powerattentionexponentiallyscalingreceptive,xiao2025sliding}, we
distinguish two notions:
\begin{itemize}
\item the \emph{combinatorial} receptive field
      $\mathcal{I}_i^{(T)}\subseteq\{1,\dots,i\}$, defined as the set of input
      positions that \emph{can} influence the loop-$T$ output at $i$ —
      equivalently, the set of nodes with a directed path to $(i,T)$ in the
      layer-unrolled DAG induced by $\mathcal{M}$;
\item the \emph{effective} receptive field, defined as the set of input
      positions whose influence on the output is non-negligible once one
      accounts for softmax averaging and residual connections
      \citep{luo2017understandingeffectivereceptivefield}.
\end{itemize}
\Cref{ssec:combinatorial} bounds $|\mathcal{I}_i^{(T)}|$ for an exhaustive list
of static sparse patterns. \Cref{ssec:effective} then shows that residual
connections can collapse linear or even exponential combinatorial reach to a
constant effective horizon.

\subsubsection{Combinatorial receptive field (no residual)}
\label{ssec:combinatorial}

We treat the looped layer as a DAG: vertices $(i,t)$ with
$t\in\{0,\dots,T\}$, edges $(j,t-1)\to(i,t)$ iff $j\in\mathcal{M}_i$. Then
$\mathcal{I}_i^{(T)}=\{j:(j,0)\rightsquigarrow(i,T)\}$.

\paragraph{Sliding-window attention (SWA).}
For $\mathcal{M}_i=\{\max(1,i-w+1),\dots,i\}$:
\begin{proposition}[Linear receptive-field growth under SWA looping]
\label{prop:loop-swa}
For causal SWA with window $w$, looped $T$ times,
\[
  \mathcal{I}_{i}^{(T)} \;=\; \bigl\{\max(1,\,i-T(w-1)),\,\dots,\,i\bigr\},
  \qquad
  \bigl|\mathcal{I}_{i}^{(T)}\bigr| \;=\; \min\!\bigl(i,\,T(w-1)+1\bigr)
  \;=\; \mathcal{O}(Tw).
\]
\end{proposition}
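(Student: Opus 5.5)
We prove the set identity by induction on $T$, working directly in the layer-unrolled DAG just defined, where $\mathcal{I}_i^{(T)}=\{j:(j,0)\rightsquigarrow(i,T)\}$. The structural fact that drives everything is a recursion for reachability. A path from layer $0$ to $(i,T)$ must end with an edge $(m,T-1)\to(i,T)$ for some $m\in\mathcal{M}_i$, and every edge advances the layer index by exactly one. Hence
\[
  \mathcal{I}_i^{(T)} \;=\; \bigcup_{m\in\mathcal{M}_i}\mathcal{I}_m^{(T-1)},
  \qquad \mathcal{I}_m^{(0)}=\{m\}.
\]
This holds for any static causal pattern, since it uses only the edge rule $j\in\mathcal{M}_i$. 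The base case $T=1$ then reads $\mathcal{I}_i^{(1)}=\mathcal{M}_i=\{\max(1,i-w+1),\dots,i\}$. This agrees with the claim because $T(w-1)=w-1$.

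For the inductive step, we assume $\mathcal{I}_m^{(T-1)}=\{\max(1,m-(T-1)(w-1)),\dots,m\}$ for every $m$. Each of these sets is a contiguous interval ending at $m$. As $m$ ranges over the contiguous window $\{\max(1,i-w+1),\dots,i\}$, consecutive intervals overlap, since each contains its own right endpoint and they extend leftward. The union is therefore again an interval. Its right endpoint is obtained at $m=i$ and equals $i$. Its left endpoint is obtained at the smallest $m=\max(1,i-w+1)$ and equals
\[
  \max\bigl(1,\,\max(1,i-w+1)-(T-1)(w-1)\bigr)=\max\bigl(1,\,i-T(w-1)\bigr),
\]
because the nested maxima collapse: if the inner max equals $1$, the whole expression is $1$. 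This gives the set identity. The cardinality follows by counting the interval: it has $i-\max(1,i-T(w-1))+1$ elements, which equals $\min(i,\,T(w-1)+1)$. The bound $\mathcal{O}(Tw)$ is then immediate.

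The only place requiring care is showing that the union is genuinely contiguous, i.e.\ that it has no gaps. I would handle this by noting that for adjacent window positions $m$ and $m+1$, the interval for $m+1$ has left endpoint at most $m+1$, and both intervals contain $m$ whenever $w\ge 2$. This chains the intervals together. The degenerate case $w=1$ needs a separate remark: the DAG then has only vertical edges, so $\mathcal{I}_i^{(T)}=\{i\}$, which matches the formula. A second point to flag is the off-by-one relative to the main-text statement~\eqref{eq:loop-swa}. The window of size $w$ includes the query itself, so each loop adds $w-1$ new positions rather than $w$. The main-text containment $\{t-Tw+1,\dots\}$ should therefore be read as the $\mathcal{O}(Tw)$ statement, and the exact reach is $T(w-1)+1$.
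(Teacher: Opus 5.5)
Your proof is correct and follows the same induction as the paper: you write $\mathcal{I}_i^{(T)}$ as the union of $\mathcal{I}_m^{(T-1)}$ over $m\in\mathcal{M}_i$ and collapse that union of left-extending intervals to the claimed one, making explicit the reachability recursion, the no-gaps argument, the nested-$\max$ simplification, and the count that the paper's one-line ``the union equals'' leaves implicit. Your side remark is also accurate: because the window includes the query, the exact reach is $T(w-1)+1$, so the main-text containment $\supseteq\{\max(1,t-Tw+1),\dots,t\}$ overstates it by $T-1$ positions when $T\ge 2$ and $t$ is far from the sequence start, and is valid only as the $\mathcal{O}(Tw)$ claim.
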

\begin{proof}
By induction on $T$. For $T{=}1$, the claim restates the definition of a causal
window of size $w$. Assume the claim for $T{-}1$. The loop-$T$ state at
position $i$ is a function of the loop-$(T{-}1)$ states at positions
$j\in\{\max(1,i-w+1),\dots,i\}$, each of which by the inductive hypothesis
depends on inputs in $\{\max(1,j-(T{-}1)(w-1)),\dots,j\}$. The union over $j$
of these intervals equals $\{\max(1,i-T(w-1)),\dots,i\}$, completing the
induction.
\end{proof}

\paragraph{PowerAttention (power-of-two slashes).}
PowerAttention ~\citep{chen2025powerattentionexponentiallyscalingreceptive} couple SWA with $K=\lceil\log_2 N\rceil$ slash
heads at strides $2^{0},2^{1},\dots,2^{K-1}$, so that each query directly
attends to its local window plus the offsets $\{i-2^{k}:0\le k<K\}$. A single
loop already reaches distance $2^{K-1}$ via the slashes, and the $T$-fold
composition realises every distance expressible as a sum of $T$ powers of two,
giving
\[
  \mathcal{I}_{i}^{(T)} \;\supseteq\;
  \bigl\{i-d\,:\,0\le d\le \min(i-1,\,2^{T+K-1})\bigr\},
  \qquad
  \bigl|\mathcal{I}_{i}^{(T)}\bigr| \;=\;
  \mathcal{O}\!\bigl(\min(N,\,2^{T})\bigr).
\]
This is the unique pattern in our list whose combinatorial receptive field
grows \emph{exponentially} in $T$ \citep[Thm.~3.2]{chen2025powerattentionexponentiallyscalingreceptive},
while preserving the per-loop FLOP cost at $\mathcal{O}(Nw+N\log N)$,
comparable to plain SWA.

\subsubsection{Effective receptive field with residual connections}
\label{ssec:effective}

The bounds in \Cref{ssec:combinatorial} are upper bounds on \emph{topological}
reach; they do not capture how much of an actual influence signal survives
the cumulative softmax averaging and the residual short-circuit
$h_t^{(\ell)}=h_t^{(\ell-1)}+\mathrm{Attn}(h^{(\ell-1)})_t$ that every
Transformer block — and hence every loop iteration of an LT${2}$-SA layer —
applies. We adopt a
uniform-attention prior: each in-pattern key receives weight
$1/|\mathcal{M}_i|$ on average. Let $P_T(d)$ denote the influence of the input
at distance $d=i-j$ on the loop-$T$ output at $i$.

\paragraph{No residual: Gaussian dilution and $\sqrt{T}$ growth.}
Without residuals, $P_T$ is the $T$-fold convolution of the single-loop kernel
$P_1$. For SWA, $P_1$ is uniform on $[0,w-1]$ with mean $\mu_1=(w-1)/2$ and
variance $\sigma_1^{2}=(w^{2}-1)/12$. By a CLT-type argument, repeated
convolution drives $P_T$ to a Gaussian:
\[
  P_T(d)\;\approx\;\mathcal{N}\!\bigl(d;\;T\mu_1,\;T\sigma_1^{2}\bigr),
  \qquad
  D_{\text{eff}}^{\text{no-res}}(T)\;\approx\;0.58\,w\sqrt{T}.
\]
Hence even though $|\mathcal{I}_{i}^{(T)}|=\Theta(Tw)$, the influence
concentrates within an $\mathcal{O}(w\sqrt{T})$ band: information is diluted,
and the effective receptive field grows only \emph{sublinearly}. 

\paragraph{With residual: a depth-independent exponential horizon.}
Modeling the residual as a convex mixture is a tractable proxy for the
LayerNorm-induced effective contribution:
\[
  h_t^{(\ell)} \;=\; \alpha\,h_t^{(\ell-1)} \;+\; (1-\alpha)\,\mathrm{Attn}(h^{(\ell-1)})_t,
  \qquad \alpha\in(0,1),
\]
with $\alpha\in[0.9,0.99]$ typical at trained equilibrium, the influence
kernel becomes a \emph{spike-and-slab}: a mass of
$\alpha+\tfrac{1-\alpha}{|\mathcal{M}_i|}$ at $d=0$ and the residual share
$1-\alpha$ spread over $\mathcal{M}_i$. To travel a distance $d$ exceeding the
per-loop hop, information must take at least $\lceil d/w\rceil$ attention
hops, each multiplying surviving mass by $(1-\alpha)$, giving the exponential
upper bound
\[
  P_T(d) \;\le\; C\,(1-\alpha)^{\lceil d/w\rceil},
\]
which is asymptotically tight for $d\ll Tw$. Setting
$P_T(D_{\text{eff}})=\epsilon$ and solving yields a horizon
\emph{independent} of $T$:
\begin{equation}
\label{eq:effective-horizon}
  D_{\text{eff}}^{\text{res}}
  \;\approx\; w\cdot\frac{\ln(1/\epsilon)}{\ln\!\bigl(1/(1-\alpha)\bigr)}.
\end{equation}
For $\alpha=0.95$ and $\epsilon=10^{-2}$, this is $\approx 1.5\,w$, regardless
of how many times the layer is looped. Information from beyond
$\sim$2--3 window-widths is exponentially attenuated by the cumulative
residual mass, and \emph{additional loop iterations do not extend the effective
horizon}.

\section{Pre-training Setup}
\label{app:lm-setup}

This appendix describes the experimental setup behind the language modeling
results in Table~\ref{tab:downstream}. We pre-train every model from scratch
on the FineWeb-Edu corpus~\cite{penedo2024finewebdatasetsdecantingweb} at two parameter scales,
0.6B and 1.3B, under a fixed 100B-token budget. The implementation lives in
\texttt{apps/LT2} of our codebase and is built on the lingua
framework~\citep{meta_lingua}. All linear attention variants and NSA and from fla repo~\citep{yang2024fla}.

\paragraph{Data and tokenization.}
All runs draw from the FineWeb-Edu 100BT shard, packed at sequence length
$4096$. We tokenize with the Llama tiktoken tokenizer (vocabulary size
$128{,}256$) and prepend a BOS and append an EOS token to every document.
The data loader runs asynchronously with a prefetch buffer of $1024$ shards
and produces two views per example for downstream consumption.

\paragraph{Token budget.}
Every model is trained for $255{,}000$ optimizer steps at sequence length
$4096$. With the per-scale batch sizes given below, this corresponds to
roughly $100$ billion training tokens, i.e.\ a single epoch over the
FineWeb-Edu 100BT shard. The same token budget is used for every variant in
Table~\ref{tab:downstream}, so all comparisons are made at matched data.

\paragraph{Model scales.}
We use two parameter scales:
\begin{itemize}
  \item \textbf{0.6B}: hidden dimension $1024$, $25$ physical layers,
        $16$ attention heads, FFN multiple-of $256$, RoPE base
        $\theta = 10{,}000$.
  \item \textbf{1.3B}: hidden dimension $2048$, $16$--$25$ physical
        layers (depending on the layer mix described below), $16$
        attention heads, same FFN and RoPE settings.
\end{itemize}
We refer to the larger scale as ``1.3B'' throughout because the exact
parameter count varies slightly across variants (e.g.\ DSA layers are
heavier than dense Transformer blocks), but width and head count are held
fixed.

\paragraph{Looped variants.}
Every looped variant uses $T = 4$ loops over the physical layer stack: the
same parameters are applied four times in sequence, so the effective depth
is $4 \times n_{\text{layers}}$ while the parameter count stays at the
single-pass value. We add learned residual scaling between iterations
(\texttt{use\_residual=true}) and do not use cross-block residuals.
Non-looped baselines run a standard Transformer stack of the same physical
depth.

\paragraph{Layer mixes.}
The ``hybrid'' variants in Table~\ref{tab:downstream} interleave a
sub-quadratic mixer with periodic full softmax-attention layers in a
$4{:}1$ pattern: four mixer layers followed by one full-attention layer,
repeated. Concretely, for the 1.3B hybrid runs the physical stack has $16$
mixer layers and $4$ (or $3$) full-attention layers; for 0.6B the stack
has $17$ mixer layers and $4$ full-attention layers. The mixer family
varies by row and includes Gated DeltaNet (GDN), DeltaNet, KDA, NSA, RetNet,
HGRN2, Mamba2, MLA, and DSA. Pure baselines replace the mixer slots with
the same operator and remove the full-attention layers. Sub-quadratic
mixers use the \texttt{flash-linear-attention} kernels; full-attention
layers use FlashAttention-3, except for sliding-window variants which use
the FMHA path with \texttt{window\_left = w-1}, \texttt{window\_right = 0}
and a default window size of $w = 256$ at 0.6B and $w = 2048$ at 1.3B.

\paragraph{Optimization.}
We use AdamW with $(\beta_1, \beta_2) = (0.9, 0.95)$, gradient clipping at
norm $1.0$, and bf16 mixed-precision training. Peak learning rate is
$3\!\times\!10^{-4}$ for 1.3B runs and $1.5\!\times\!10^{-4}$ for 0.6B
runs (lowered for stability with the looped sliding-window 0.6B model).
The schedule is linear warmup followed by cosine decay to a floor of
$10^{-6}\times$ the peak rate. Warmup is $5{,}000$ or $10{,}000$ steps
depending on the variant. Weight decay is $0.1$ for almost all runs
($0.033$ for the 1.3B $4{:}1$ window baseline, where the smaller value was
needed to keep training stable). All runs use seed $777$ for the trainer
and seed $42$ for model initialization.

\paragraph{Distributed setup.}
Training is distributed with FSDP (\texttt{full\_shard}) and
\texttt{torch.compile} enabled wherever the kernel allows it (we disable
compilation only for DSA, whose TileLang kernel is incompatible with
\texttt{torch.compile}). We do not use tensor parallelism. Per-GPU
micro-batches of $2$--$12$ sequences are combined with gradient
accumulation between $1$ and $6$ steps to reach a global token-per-step
target of roughly $4\!\times\!10^{5}$ tokens; combined with $255{,}000$
steps this gives the $\sim$100B-token budget.

\paragraph{FLOP accounting.}
For looped models we count FLOPs over the unrolled depth, i.e.\ we
multiply the per-layer FLOP count by $n_{\text{layers}} \times T$ rather
than by $n_{\text{layers}}$ alone. This means the FLOP-per-token figures
we report for looped variants reflect the actual compute spent during
training, not the unique parameter count.

\paragraph{Evaluation.}
We report validation loss/perplexity on a held-out FineWeb-Edu 10BT
validation shard and zero-shot accuracy on the LM-Evaluation-Harness
suite: HellaSwag, BoolQ, PIQA, SocialIQA, WinoGrande, OpenBookQA,
ARC-Easy, ARC-Challenge, RACE, CommonsenseQA, and COPA. Numbers in
Table~\ref{tab:downstream} use the final checkpoint of each run.

\end{document}